%% file: main.tex
\documentclass{article}

\usepackage[preprint]{neurips_2026}
\makeatletter
\renewcommand{\@notice}{\enlargethispage{2\baselineskip}}
\makeatother

\usepackage[utf8]{inputenc} % allow utf-8 input
\usepackage[T1]{fontenc}    % use 8-bit T1 fonts
\usepackage{hyperref}       % hyperlinks
\usepackage{url}            % simple URL typesetting
\usepackage{booktabs}       % professional-quality tables
\usepackage{amsfonts}       % blackboard math symbols
\usepackage{nicefrac}       % compact symbols for 1/2, etc.
\usepackage{microtype}      % microtypography
\usepackage{xcolor}         % colors

\usepackage{subfiles} 
\usepackage{multirow}
\usepackage{tabularx}
\usepackage{graphicx}
\usepackage{caption}
\usepackage{enumitem}
\usepackage{subfig} % 提供table/figure 并排排列功能
\usepackage{wrapfig} % 图片浮动
\usepackage{makecell} % 引入makecell宏包, 表格内换行
\usepackage{amsmath}
\usepackage{amsthm}
\newtheorem{proposition}{Proposition}
\newtheorem{corollary}{Corollary}
\usepackage{minitoc}
\usepackage{titletoc}

\usepackage{algorithm}
\usepackage{algpseudocode}
\usepackage{booktabs}
\usepackage{xcolor}
\usepackage[table]{xcolor}

\title{Tikhonov-Stabilized B\'ezier Representation Forecasting for Training-free Diffusion Acceleration}

\author{%
  \mdseries Lei Zhu\textsuperscript{1,}\thanks{Equal contribution.}, Mujie Lin\textsuperscript{1,*},
  Ruochong Zheng\textsuperscript{1}, Guangyi Wang\textsuperscript{2} \\
  \mdseries Hao Li\textsuperscript{1}, Peng Jin\textsuperscript{1},
  Chang Liu\textsuperscript{3}, Jie Chen\textsuperscript{1,}\thanks{Corresponding author.} \\[0.5em]
  \textsuperscript{1}School of Electronic and Computer Engineering, Peking University \\
  \textsuperscript{2}School of Informatics, Xiamen University \\
  \textsuperscript{3}Department of Automation and BNRist, Tsinghua University
}

\begin{document}
\maketitle

\begin{abstract}
Diffusion models, particularly Diffusion Transformers, achieve strong image and video generation quality but remain expensive at inference time due to repeated denoiser evaluations. Feature caching offers a deployment-friendly acceleration strategy by reusing or predicting intermediate representations without retraining the generator or modifying the sampler. However, existing cache-then-forecast methods often rely on Taylor-style finite-difference extrapolation, which can become unstable over longer cache intervals, and typically predict local module outputs whose errors may accumulate through subsequent denoiser blocks. We propose \textbf{BeziCast}, a training-free diffusion acceleration framework that forecasts output-proximal denoising representations using low-order B\'ezier trajectories. BeziCast estimates B\'ezier control points via Tikhonov-stabilized fitting, providing a smooth, capacity-controlled temporal parameterization that avoids explicit high-order derivative estimation and decouples trajectory capacity from cache-query density. To moderate aggressive extrapolation, we further derive equivalent forecasting weights and introduce a convex-hull-inspired guardrail that detects high-risk predictions and softly pulls them toward a conservative simplex-projected estimate. 
Extensive evaluations across advanced image and video diffusion models demonstrate the effectiveness of BeziCast. In particular, BeziCast accelerates FLUX.1 by up to \(4.79\times\) and HunyuanVideo by up to \(4.11\times\), while preserving substantially better generation quality than competing acceleration baselines.
\end{abstract}

\subfile{Contents/1intro}
\subfile{Contents/2related}
\subfile{Contents/3method}
\subfile{Contents/4exp}
% \subfile{Contents/5discussion}

\newpage
\bibliographystyle{plain}
\bibliography{reference}

%%%%%%%%%%%%%%%%%%%%%%%%%%%%%%%%%%%%%%%%%%%%%%%%%%%%%%%%%%%%

\subfile{Contents/appendix}
% Technical appendices with additional results, figures, graphs, and proofs may be submitted with the paper submission before the full submission deadline (see above). You can upload a ZIP file for videos or code, but do not upload a separate PDF file for the appendix. There is no page limit for the technical appendices. 

% Note: Think of the appendix as ``optional reading'' for reviewers. The paper must be able to stand alone without the appendix; for example, adding critical experiments that support the main claims to an appendix is inappropriate. 

%%%%%%%%%%%%%%%%%%%%%%%%%%%%%%%%%%%%%%%%%%%%%%%%%%%%%%%%%%%%

% \newpage
% \input{checklist.tex}

\end{document}

%% file: Contents/1intro.tex
\section{Introduction}
\vspace{-0.25cm}

Diffusion models\cite{ho2020denoising}, especially large generative models based on Diffusion Transformer\cite{peebles2023scalable,esser2024scaling,zhu2026omnihuman,kong2024hunyuanvideo}, have become a standard approach for high-fidelity image and video generation. Their sampling process, however, still requires repeated denoiser evaluations across many timesteps, and this inference cost remains a major obstacle to large-scale deployment.

A central question in diffusion acceleration is how to approximate the original multi-step sampling process under a limited NFE budget\cite{lu2022dpm,zhao2023unipc}. Existing approaches can be broadly grouped into three directions. Solver-based methods improve the sampler through higher-order updates\cite{lu2022dpm,lu2025dpm,zhao2023unipc}, timestep schedules\cite{karras2022elucidating}, and trajectory-aware error correction or time allocation\cite{zhou2024fast,wangdiffusion,chentrajectory,zhou2024simple,zhu2026realdyadic,wang2024pfdiff,zhou2026analyzing,min2025b}. Distillation-based methods compress multi-step sampling into few-step or one-step generation through progressive distillation\cite{salimans2022progressive}, consistency modeling\cite{luo2023latent}, or distribution matching\cite{yin2024one}. Feature caching methods instead keep the pretrained generator largely unchanged and reduce the cost of each denoiser evaluation by reusing or predicting internal representations\cite{ma2024deepcache,liu2025reusing}. Compared with the former two directions, feature caching is particularly deployment-friendly because it avoids retraining the full model and does not require redesigning the sampling dynamics.

Early training-free feature caching methods mainly followed a \emph{cache-then-reuse} paradigm, where intermediate features from nearby timesteps are directly reused to skip selected layers or modules\cite{ma2024deepcache,selvaraju2024fora,liu2025teaser,ma2025magcache}. Recent methods move toward \emph{cache-then-forecast}. TaylorSeer\cite{liu2025reusing} predicts future features with finite differences and Taylor expansion, and 
\begin{wrapfigure}{r}{0.60\linewidth}
\vspace{-1.2em}
\centering
\includegraphics[width=\linewidth]{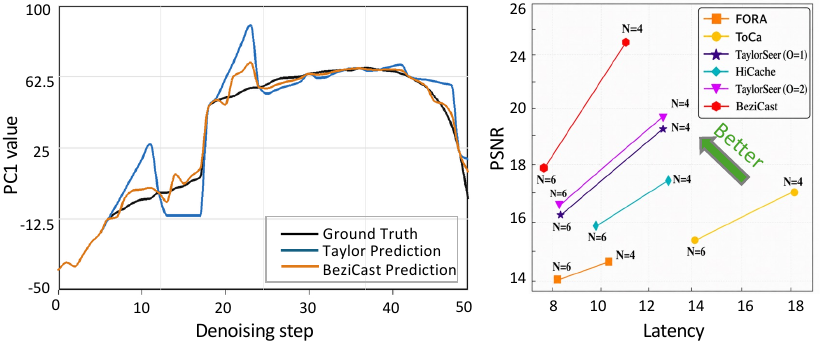}
\vspace{-0.5cm}
\caption{\textbf{Stable, Fast, and Accurate Diffusion Acceleration.} \textit{Left}: PC1 projections of output-proximal representation in denoising step. \textit{Right}: Speedup and PSNR comparison, where \(N\) is the cache interval. BeziCast achieves faster inference while better quality.}
\label{fig:motivation}
\vspace{-0.8cm}
\end{wrapfigure}
SpeCa\cite{liu2025speca} adds a lightweight verifier to accept or reject predicted features. Spectrum\cite{han2026spectrum} takes a global function-approximation view by fitting Chebyshev polynomial coefficients for final-layer denoiser features and deriving a horizon-independent error bound under its approximation setting. These methods show that denoiser representations have exploitable temporal regularity, but they also leave two design choices: which representation should be predicted, and which temporal basis should be used.

The first design choice is the \textbf{temporal basis}. Taylor-style predictors are simple, but finite-difference derivative estimates and local power-basis extrapolation can amplify small cache perturbations at longer offsets or higher orders\cite{liu2025reusing,li2023weakly,fenghicache}, as evidenced by the instability of the blue curve and its large deviation from the ground truth on the left of Fig.~\ref{fig:motivation}. This sensitivity is consistent with the numerical behavior of monomial and Vandermonde-type systems\cite{Shen_2025}, the non-monotone feature trajectories discussed by Feng et al.\cite{fenghicache}, and the large-horizon error growth observed by Han et al.\cite{han2026spectrum}. At the same time, diffusion sampling trajectories often exhibit structured low-dimensional geometry\cite{chen2025geometric}. We therefore seek a compact temporal parameterization that avoids explicit high-order finite differences while remaining analytically evaluable at cache steps.

The second design choice is the \textbf{forecasting target}. Existing caching methods often predict attention outputs, MLP outputs inside transformer blocks\cite{selvaraju2024fora,ma2025magcache,zhu2025learning,liu2025speca}. These local targets give fine-grained control, but their errors still pass through downstream denoiser blocks before reaching the final noise estimate and may accumulate across predicted modules. We instead predict an output-proximal denoising representation, which makes the path from representation error to denoising error shorter and more direct without assuming that this representation is intrinsically easier to forecast.

Based on these observations, we propose \textbf{BeziCast}, \textbf{a training-free \emph{Tikhonov-stabilized B\'ezier representation forecasting} framework, achieving faster, more stable, and more accurate acceleration than competing methods}, as shown on the right of Fig.~\ref{fig:motivation}.
BeziCast predicts an output-proximal denoising representation with a low-order Bernstein/B\'ezier trajectory whose control points are estimated by Tikhonov-stabilized fitting. The B\'ezier form provides analytic evaluation and a compact capacity-controlled function family\cite{farouki1987numerical,wu2026pear,patrikalakis2009shape}, while the Bernstein basis has classical stability properties on a fixed interval\cite{farouki1996optimal}, as shown by the orange curve on the left of Fig.~\ref{fig:motivation}, which is closer to the ground truth than Taylor-based extrapolation. Because cache queries are often short-range extrapolations, BeziCast does not use the convex-hull property as a guarantee; instead, it uses equivalent weights to detect signed or non-affine extrapolation and pulls high-risk predictions toward a simplex-projected conservative estimate. The pretrained denoiser and sampler remain unchanged.

Our contributions are as follows: 
(1) We formulate training-free cache-then-forecast acceleration as output-proximal representation forecasting, avoiding multiple local module targets.
(2) We use a low-order Bernstein/B\'ezier trajectory with Tikhonov-stabilized control-point fitting, replacing finite-difference Taylor extrapolation, decoupling trajectory capacity from cache-query density, and expressing each prediction as a regularized linear combination of cached representations.
(3) We introduce an equivalent-weight convex-hull guardrail that detects aggressive extrapolation through negative weight mass, deviation from unit sum, and the \(\ell_1\) norm, then pulls high-risk predictions toward a simplex-projected conservative estimate.

%% file: Contents/2related.tex
\section{Related Work}
\vspace{-0.25cm}

Diffusion acceleration methods mainly reduce either the number of denoising steps, the cost of each denoising step, or both.

\noindent\textbf{Sampler-based Acceleration.}
Sampler-based methods reduce the number of function evaluations by improving the reverse diffusion discretization. DDIM \cite{song2020denoising} enables deterministic non-Markovian sampling from DDPM \cite{ho2020denoising} models with fewer steps. DPM-Solver \cite{lu2022dpm} and DPM-Solver++ \cite{lu2025dpm} derive high-order ODE solvers for probability-flow sampling, UniPC \cite{zhao2023unipc} unifies predictor-corrector updates, and EDM \cite{karras2022elucidating} improves noise parameterization, preconditioning, and sampling schedules. Recent trajectory-aware methods further exploit path regularity, error correction, or time allocation, including AMED-Solver \cite{zhou2024fast}, Trajectory Regularity \cite{chentrajectory}, Simple and Fast Distillation \cite{zhou2024simple}, PAS \cite{wangdiffusion}, PFDiff \cite{wang2024pfdiff}, fast-sampling analysis \cite{zhou2026analyzing}, and B\'ezierFlow \cite{min2025b}. These methods mainly accelerate sampling by changing the solver, schedule, or trajectory correction strategy.

\noindent\textbf{Training-based Acceleration.}
Training-based methods compress a multi-step teacher into a faster generator through extra optimization. Progressive Distillation \cite{salimans2022progressive} halves sampling steps iteratively; Consistency Models \cite{song2023consistency} and Latent Consistency Models \cite{luo2023latent} enforce trajectory-consistent predictions for few-step generation; DMD \cite{yin2024one} and DMD2 \cite{yin2024improved} match the teacher distribution rather than the full teacher trajectory; and InstaFlow \cite{liu2023instaflow} uses rectified-flow-style training to straighten text-to-image generation paths. These approaches can be effective, but they usually require additional training, teacher models, or specialized objectives.

% \noindent\textbf{Feature-caching-based Acceleration.}
% Feature-caching methods reduce inference cost by exploiting temporal redundancy in denoiser activations while keeping the pretrained denoiser largely unchanged. Early cache-then-reuse methods directly reuse intermediate features or residuals: DeepCache \cite{ma2024deepcache} reuses high-level U-Net features, FORA \cite{selvaraju2024fora} extends fast-forward caching to diffusion transformers, and Reuse and Diffuse \cite{gu2023reuse} plus MagCache \cite{ma2025magcache} study feature or residual reuse for video diffusion. Recent cache-then-forecast methods model feature evolution instead of copying old features: TaylorSeer \cite{liu2025reusing} uses finite differences and Taylor expansion, FreqCa \cite{liu2025freqca} adds frequency-aware caching, Forecast-then-Calibrate \cite{zheng2026forecast} introduces feature calibration, HiCache \cite{fenghicache} uses higher-order interpolation, and Spectrum \cite{han2026spectrum} fits global spectral approximations. Adaptive variants decide when or where to recompute, including DiCache \cite{bu2025dicache} for dynamic cache scheduling, SpeCa \cite{liu2025speca} for feature verification, and token-level cache updates \cite{zheng2025compute}. BeziCast follows the training-free cache-then-forecast line, but focuses on output-proximal representation forecasting with a Tikhonov-stabilized Bernstein/B\'ezier trajectory and an equivalent-weight guardrail for extrapolation risk.

\noindent\textbf{Feature-caching-based Acceleration.}
Feature-caching methods\cite{liu2025reusing,yang2024showmaker,selvaraju2024fora,yang2025gesturehydra} reduce inference cost by exploiting temporal redundancy in denoiser activations while keeping the pretrained denoiser largely unchanged. Early cache-then-reuse methods directly reuse intermediate features or residuals: DeepCache~\cite{ma2024deepcache} reuses high-level U-Net features, FORA~\cite{selvaraju2024fora} extends fast-forward caching to diffusion transformers, ToCa~\cite{zou2024accelerating} introduces token-wise feature caching, and Reuse and Diffuse~\cite{gu2023reuse} plus MagCache~\cite{ma2025magcache} study feature or residual reuse for video diffusion. Recent cache-then-forecast methods model feature evolution instead of copying old features: TaylorSeer~\cite{liu2025reusing} uses finite differences and Taylor expansion, FreqCa~\cite{liu2025freqca} adds frequency-aware caching, Forecast-then-Calibrate~\cite{zheng2026forecast} introduces feature calibration, HiCache~\cite{fenghicache} replaces Taylor-style monomial forecasting with Hermite-polynomial-based prediction, and Spectrum~\cite{han2026spectrum} fits global spectral approximations. Adaptive variants such as TeaCache~\cite{liu2025timestep}, DiCache~\cite{bu2025dicache}, SpeCa~\cite{liu2025speca}, and token-level cache updates~\cite{zheng2025compute} further show that diffusion features exhibit strong temporal and structural redundancy. BeziCast follows the training-free cache-then-forecast line, but focuses on output-proximal representation forecasting with a Tikhonov-stabilized B\'ezier trajectory and an equivalent-weight guardrail for extrapolation risk.

%% file: Contents/3method.tex
\section{Method}
\label{sec:method}
\vspace{-0.25cm}

% \begin{figure}[t]
%     \centering
%     \includegraphics[width=1.\linewidth]{imgs/Pipeline_1_cropped.pdf}
%     \vspace{-1.5em}
%     \caption{
%     Overview of BeziCast.
%     }
%     \label{fig:pipeline}
%     \vspace{-1.0cm}
% \end{figure}

\begin{figure}[t]
    \centering
    \includegraphics[width=0.9\linewidth]{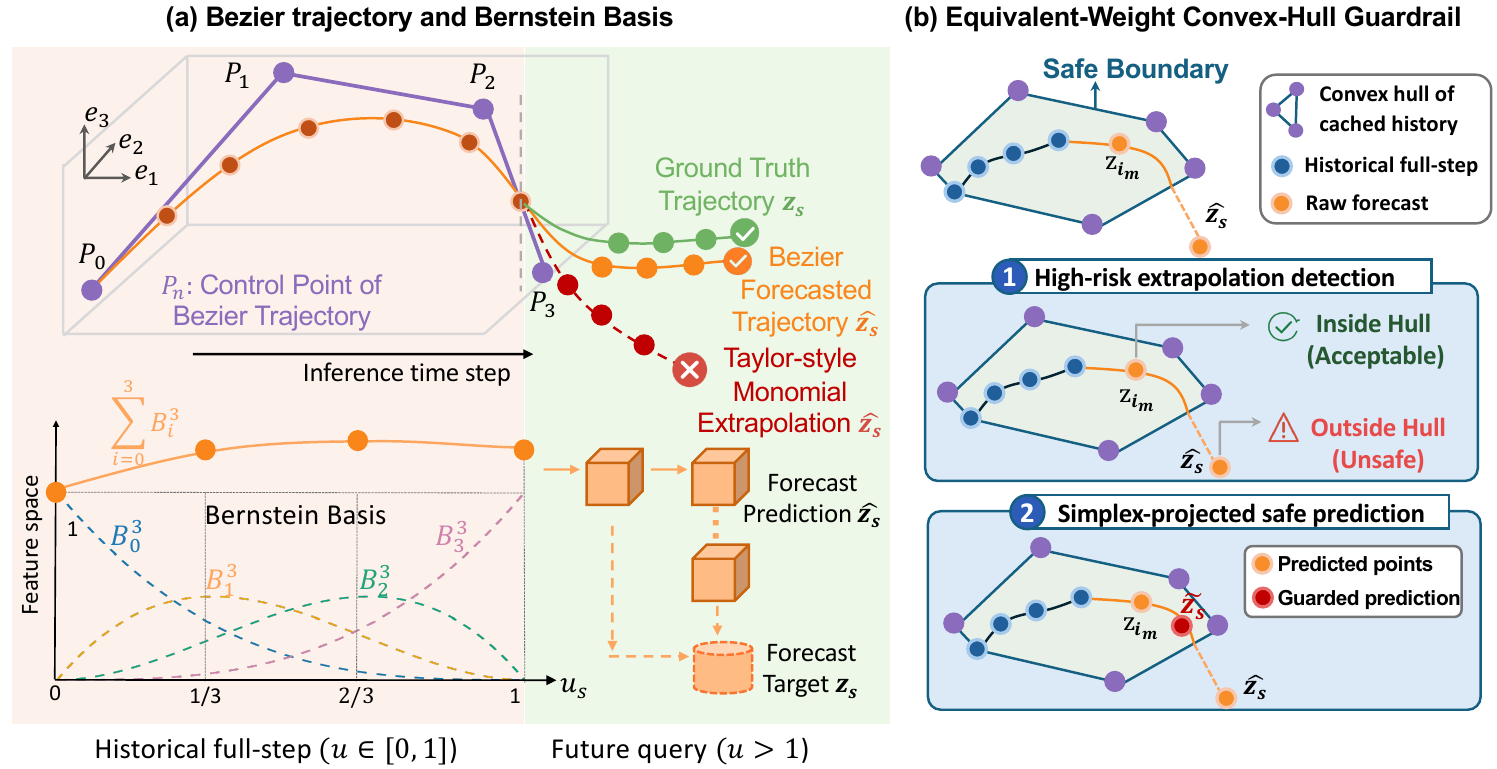}
    \vspace{-0.6em}
    \caption{
    % \textbf{Overview of BeziCast.}
    % (a) BeziCast forecasts cached output-proximal representations with a low-order B\'ezier trajectory, avoiding unstable monomial-basis extrapolation.
    % (b) An equivalent-weight guardrail detects risky extrapolations and pulls them toward a simplex-projected estimate.
    \textbf{Overview of BeziCast.}
(a) A local cubic Bernstein/B\'ezier trajectory is fitted from cached full-step representations and evaluated at cache queries \(u_s > 1\), providing a low-order alternative to Taylor-style extrapolation.
(b) BeziCast detects risky extrapolation via equivalent history weights, constructs a simplex-projected prediction inside the cached-representation convex hull, and softly blends it with the raw forecast to obtain the guarded prediction.
}
    \label{fig:pipeline}
    \vspace{-0.5cm}
\end{figure}

BeziCast is a training-free cache-then-forecast method for diffusion
sampling. We recast feature prediction as output-proximal representation
trajectory approximation: from a short history of denoising representations
near the output, the method estimates a low-order B\'ezier trajectory
and evaluates it at skipped cache steps. As shown in Fig.~\ref{fig:pipeline},
BeziCast runs the original denoiser at selected full steps, caches
output-proximal representations, predicts them at cache steps with
Tikhonov-stabilized Bernstein fitting, and applies the guardrail when needed.
It does not train additional models,
modify the sampler, or change the pretrained denoiser. This section first
defines the cache-then-forecast setting, then gives the predictor and guardrail.
The full inference algorithm is provided in
Appendix~\ref{app:online_algorithm}.
\vspace{-0.25cm}

\subsection{Cache-Then-Forecast Formulation}
\label{sec:unified_framework}

\noindent\textbf{Definition 1 (diffusion sampling).}
Let $\mathcal{T}=\{t_0,t_1,\ldots,t_{T-1}\}$ be the sampling time grid, where
\(s\in\{0,\ldots,T-1\}\) is the solver-step index. Given latent state \(x_s\),
condition \(c\), and diffusion time \(t_s\), the pretrained denoiser predicts
\(\epsilon_s=\epsilon_\theta(x_s,t_s,c)\).
The sampler updates the latent by
\begin{equation}
    x_{s+1}=\Psi_s(x_s,\epsilon_s),
    \label{eq:sampler_update}
\end{equation}
where \(\Psi_s\) denotes one solver step. We seek to reduce expensive denoiser
computation while keeping both the pretrained denoiser \(\epsilon_\theta\) and
the sampler update \(\Psi_s\) unchanged.

% A key design choice in cache-then-forecast acceleration is the representation
% to cache and predict. Existing methods often forecast attention outputs, MLP
% outputs, or residual branches inside transformer
% blocks~\cite{selvaraju2024fora,ma2025magcache,liu2025speca}. These local
% targets offer fine-grained control, but their errors pass through downstream
% nonlinear blocks before reaching the final noise estimate, and multiple local
% predictions can accumulate within one denoising step. We therefore forecast a
% single output-proximal representation immediately before the final output map.
% This target placement also follows the final-block forecasting setting studied
% by Han et al.~\cite{han2026spectrum}.

A key design choice in cache-then-forecast acceleration is the representation to cache and predict. Prior works typically forecast local features such as attention, MLP, or residual outputs within transformer blocks~\cite{selvaraju2024fora,ma2025magcache,wu2025swift4d,liu2025speca}, but errors in these intermediate targets are propagated through subsequent nonlinear layers and can accumulate if multiple predictions occur per denoising step. To ensure errors are directly reflected in the final noise estimate and to simplify control, we forecast a single output-proximal representation immediately before the final output map, aligning with the final-block forecasting setting of Spectrum~\cite{han2026spectrum}.

\begin{figure}[t]
    \centering
    \includegraphics[width=0.85\linewidth]{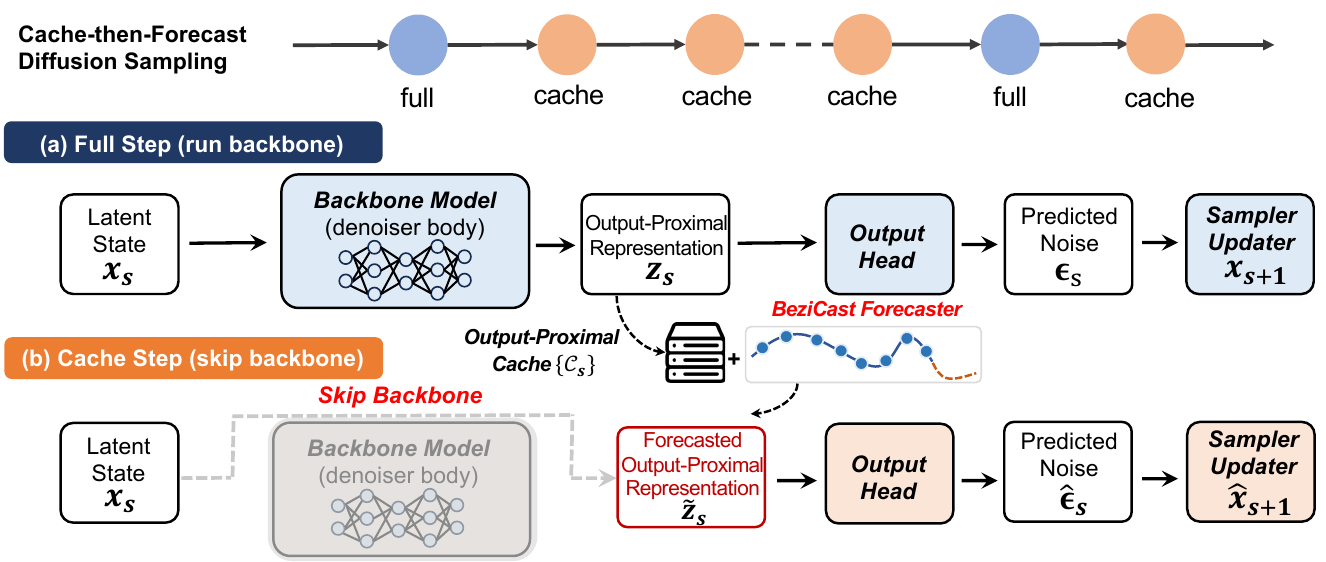}
    \vspace{-0.05in}
    \caption{
BeziCast sampling pipeline. In the cache-then-forecast paradigm, selected full steps run the denoiser backbone and cache output-proximal representations, while intermediate cache steps skip the backbone and forecast the missing representation from history. BeziCast forecasts \(\tilde z_s\) with Tikhonov-stabilized B\'ezier fitting and then uses the unchanged output head and sampler update.
    % BeziCast sampling pipeline. Full steps cache the output-proximal representation \(z_s\) from the denoiser backbone. Cache steps skip the backbone and forecast \(\tilde z_s\) with Tikhonov-stabilized B\'ezier fitting.
    }
    \label{fig:sampling_pipeline}
    \vspace{-0.4cm}
\end{figure}

\noindent\textbf{Definition 2 (output-proximal representation).}
\label{sec:output_proximal}
We decompose the pretrained denoiser into a backbone and a final output map:
\begin{equation}
    z_s
    =
    \mathcal{F}_\theta(x_s,t_s,c),
    \qquad
    \epsilon_\theta(x_s,t_s,c)
    =
    \mathcal{O}_\theta(z_s,t_s,c).
    \label{eq:denoiser_decomposition}
\end{equation}
Here \(z_s\in\mathbb{R}^{D}\) is the flattened output-proximal denoising
representation, and \(D\) absorbs token and channel dimensions. The map
\(\mathcal{F}_\theta\) denotes the expensive backbone computation up to this
representation, while \(\mathcal{O}_\theta\) denotes the lightweight output map
that converts \(z_s\) into the denoising prediction. The output map is always
executed, both at full steps and cache steps.

\noindent\textbf{Proposition 1 (output-proximal error path).}
Assume the output map \(\mathcal{O}_\theta\) is locally Lipschitz with respect
to its representation input. If \(\hat z_s=z_s+e_s^z\) is an
output-proximal prediction, then
\begin{equation}
    \|\mathcal{O}_\theta(\hat z_s,t_s,c)-\mathcal{O}_\theta(z_s,t_s,c)\|
    \le
    L_{\mathcal{O}}\|e_s^z\|,
    \label{eq:output_proximal_error}
\end{equation}
where \(L_{\mathcal{O}}\) is a local Lipschitz constant of the output map.

Proposition 1 supports our choice of forecasting a single output-proximal representation immediately before the output mapping through the output-map Lipschitz relation. Appendix~\ref{app:output_proximal_analysis} provides the corresponding local Jacobian/Lipschitz analysis and proofs, including the derivations for intermediate-layer and sampler-level error propagation.

\noindent\textbf{Definition 3 (training-free cache forecaster).}
Let
\begin{equation}
    \mathcal{S}_{\mathrm{full}}\subseteq\{0,\ldots,T-1\},
    \qquad
    \mathcal{S}_{\mathrm{cache}}
    =
    \{0,\ldots,T-1\}\setminus\mathcal{S}_{\mathrm{full}}
\end{equation}
denote full-step and cache-step indices. At a full step
\(s\in\mathcal{S}_{\mathrm{full}}\), the method runs the original denoiser
backbone, obtains \(z_s\) and \(\epsilon_s\) by
Eq.~\eqref{eq:denoiser_decomposition}, and stores \(z_s\). At a cache step
\(s\in\mathcal{S}_{\mathrm{cache}}\), it skips the backbone, predicts
a raw representation \(\hat z_s\) from cached representations, optionally
forms the guarded representation \(\tilde z_s\), and computes
\(\hat\epsilon_s=\mathcal{O}_\theta(\tilde z_s,t_s,c)\) with the unchanged
output map. Without guardrail activation, \(\tilde z_s=\hat z_s\). The sampler
then uses \(\hat\epsilon_s\) in Eq.~\eqref{eq:sampler_update}.

Let $\mathcal{C}_s = \{(\tau_{i_m},z_{i_m})\}_{m=1}^{K}$
be the local cache at step \(s\), where \(i_1<\cdots<i_K<s\) are the most
recent \(K\) full steps and \(\tau_i\) is a monotone scalar coordinate
associated with \(t_i\). A cache forecaster has the form
\begin{equation}
    \hat z_s
    =
    f(\mathcal{C}_s,\tau_s),
    \qquad
    s\in\mathcal{S}_{\mathrm{cache}}.
    \label{eq:generic_forecaster}
\end{equation}
Different cache-then-forecast methods correspond to different choices of
\(f\). Direct reuse sets \(f(\mathcal{C}_s,\tau_s)=z_{i_K}\). Taylor-style
forecasting estimates finite differences from nearby cached representations
and extrapolates them to \(\tau_s\). BeziCast instead uses the
regularized Bernstein/B\'ezier forecaster defined below.

In our cubic B\'ezier setting, we use a warm-up plus fixed-period schedule
(see Appendix~\ref{app:warmup_fixed_schedule}). The
first \(W\ge4\) solver steps are computed fully to initialize the
representation history. After warm-up, one full denoiser evaluation is
performed every \(R\) steps, and each intermediate cache step is forecast from
at most \(K_{\max}=6\) recent full-step representations.

\subsection{Tikhonov-Stabilized B\'ezier Forecaster}
\label{sec:bezier_design}
\label{sec:basis_motivation}

The temporal basis determines how cached denoising representations are
predicted at skipped timesteps. Taylor-style predictors estimate finite
differences and extrapolate them in a Newton-form power basis. BeziCast
uses a low-order Bernstein/B\'ezier trajectory with Tikhonov-stabilized fitting.
The forecaster is specified below by local time normalization, a Bernstein
basis, and a Tikhonov control-point estimate.
Appendix~\ref{app:bernstein_design_motivation} gives the design motivation and perturbation discussion.

Since future full-step representations are unavailable during online inference, cache-step prediction is typically performed by extrapolating beyond the recent history window. The standard B\'ezier convex-hull property only holds inside the interpolation interval, so we control extrapolation by clipping the query coordinate with \(u_{\max}\) and by applying the equivalent-weight guardrail in Section~\ref{sec:convex_hull_guardrail}.

\noindent\textbf{Definition 4 (local time normalization).}
For a cache step \(s\), let
\(\mathcal{I}_s=\{i_1,\ldots,i_K\}\), with \(i_1<\cdots<i_K<s\), be the
indices of the most recent \(K\) full steps. Historical coordinates are
normalized by
\begin{equation}
    u_m
    =
    \frac{\tau_{i_m}-\tau_{i_1}}
    {\tau_{i_K}-\tau_{i_1}+\varepsilon_\tau},
    \qquad
    m=1,\ldots,K,
    \label{eq:history_normalization}
\end{equation}
where \(\varepsilon_\tau>0\) is a small numerical constant. The cache-step
query coordinate is normalized with the same local window and clipped by
\(u_{\max}\):
\begin{equation}
    u_s
    =
    \frac{\tau_s-\tau_{i_1}}
    {\tau_{i_K}-\tau_{i_1}+\varepsilon_\tau},
    \qquad
    u_s\leftarrow \min(u_s,u_{\max}),
    \label{eq:query_normalization}
\end{equation}
The cap \(u_{\max}\) is a fixed hyperparameter for the extrapolation coordinate.

\noindent\textbf{Definition 5 (Bernstein/B\'ezier trajectory).}
For \(n\) control representations
\(P_0,\ldots,P_{n-1}\in\mathbb{R}^{D}\), define
\begin{equation}
    \hat z(u)
    =
    \sum_{j=0}^{n-1}
    b_{j,n-1}(u)P_j,
    \qquad
    b_{j,n-1}(u)
    =
    \binom{n-1}{j}u^j(1-u)^{n-1-j}.
    \label{eq:bezier_trajectory}
\end{equation}
The control representations parameterize a local trajectory; they are fitted
parameters and need not coincide with any cached representation.

\noindent\textbf{Definition 6 (Tikhonov control-point fitting).}
Let $b(u) = [b_{0,n-1}(u),\ldots,b_{n-1,n-1}(u)]^\top\in\mathbb{R}^{n}$,
and construct the Bernstein design matrix \(A\in\mathbb{R}^{K\times n}\) by
\(A_{m,j}=b_{j,n-1}(u_m)\).
Stack the cached output-proximal representations as
\begin{equation}
    Z=
    \begin{bmatrix}
        z_{i_1} & z_{i_2} & \cdots & z_{i_K}
    \end{bmatrix}^\top
    \in\mathbb{R}^{K\times D}.
\end{equation}
We estimate the control matrix \(P\in\mathbb{R}^{n\times D}\) (composed of control points) by
\begin{equation}
    P_\lambda^\star
    =
    \arg\min_P
    \|AP-Z\|_F^2+
    \lambda\|\Gamma P\|_F^2,
    \label{eq:tikhonov_objective}
\end{equation}
where \(\lambda>0\) is the regularization strength and \(\Gamma\) is the
Tikhonov matrix. We set \(\Gamma=I\), yielding the
ridge-stabilized solution
\begin{equation}
    P_\lambda^\star
    =
    (A^\top A+\lambda I)^{-1}A^\top Z.
    \label{eq:ridge_solution}
\end{equation}

\noindent\textbf{Proposition 2 (equivalent linear smoother).}
With \(\Gamma=I\), the cache-step prediction can be written as
\begin{equation}
    \hat z_s
    =
    b(u_s)^\top(A^\top A+\lambda I)^{-1}A^\top Z.
    \label{eq:ridge_bezier_prediction}
\end{equation}
Equivalently,
\begin{equation}
    \hat z_s
    =
    \omega_\lambda(u_s)^\top Z,
    \qquad
    \omega_\lambda(u_s)^\top
    =
    b(u_s)^\top(A^\top A+\lambda I)^{-1}A^\top
    \in\mathbb{R}^{1\times K}.
    \label{eq:equivalent_weight}
\end{equation}

\noindent\emph{Proof.}
Substituting Eq.~\eqref{eq:ridge_solution} into
Eq.~\eqref{eq:bezier_trajectory} gives Eq.~\eqref{eq:ridge_bezier_prediction}.
Collecting the coefficients applied to \(Z\) gives Eq.~\eqref{eq:equivalent_weight}.
Eq.~\eqref{eq:equivalent_weight} rewrites the prediction as a weighted sum of
cached representations. The equivalent weights depend on the local coordinates,
the Bernstein basis, and \(\lambda\).

\noindent\textbf{Proposition 3 (regularization-controlled sensitivity).}
For the ridge-stabilized predictor in Eq.~\eqref{eq:ridge_bezier_prediction},
any cache perturbation \(\delta Z\) satisfies
\begin{equation}
    \|\delta \hat z_s\|_F
    \le
    \frac{\|b(u_s)\|_2}{2\sqrt{\lambda}}
    \|\delta Z\|_F.
    \label{eq:ridge_bezier_stability}
\end{equation}
On the clipped query range \(u_s\in[0,u_{\max}]\), the factor
\(\|b(u_s)\|_2\) is bounded by
\(\sup_{u\in[0,u_{\max}]}\|b(u)\|_2\). The proof and the comparison with
ordinary least squares are given in Appendix~\ref{app:bernstein_stability}.

\subsection{Equivalent-Weight Convex-Hull Guardrail}
\label{sec:convex_hull_guardrail}

The equivalent weights in Eq.~\eqref{eq:equivalent_weight} are also used to
define a coefficient-space score. When \(u_s>1\), the induced historical
weights may become signed, deviate from unit sum, or have a large \(\ell_1\)
norm. The following definitions specify the score and the simplex-projected
fallback.

\noindent\textbf{Definition 7 (equivalent-weight risk).}
Let \(w=\omega_\lambda(u_s)\in\mathbb{R}^{K}\). Define
\begin{equation}
    r(u_s)
    =
    |\mathbf{1}^\top w-1|
    +
    \bigl(\|w\|_1-|\mathbf{1}^\top w|\bigr).
    \label{eq:risk_score}
\end{equation}
The first term measures deviation from an affine combination. The second term
measures excess signed mass.

\noindent\textbf{Proposition 4 (risk-score interpretation).}
The score \(r(u_s)\) is nonnegative. Moreover, \(r(u_s)=0\) if and only if
\(w\) lies in the probability simplex
\begin{equation}
    \Delta
    =
    \{\alpha\in\mathbb{R}^{K}:\alpha_m\ge 0,\ \mathbf{1}^\top\alpha=1\}.
\end{equation}
If \(\mathbf{1}^\top w=1\), then
\begin{equation}
    r(u_s)
    =
    \|w\|_1-1
    =
    2\sum_{m=1}^{K}\max(-w_m,0).
\end{equation}

\noindent\emph{Proof.}
The inequality \(\|w\|_1\ge |\mathbf{1}^\top w|\) gives nonnegativity. If
\(r(u_s)=0\), then \(\mathbf{1}^\top w=1\) and \(\|w\|_1=1\), which implies
\(w_m\ge 0\) for all \(m\). Thus \(w\in\Delta\). The converse is immediate.
When \(\mathbf{1}^\top w=1\), decomposing \(w\) into positive and negative
parts gives \(\|w\|_1-1=2\sum_m\max(-w_m,0)\).

\noindent\textbf{Definition 8 (guarded prediction).}
We project the equivalent weights onto the simplex:
\begin{equation}
    w_{\mathrm{safe}}(u_s)
    =
    \Pi_\Delta(w),
    \label{eq:simplex_projection}
\end{equation}
where \(\Pi_\Delta\) denotes Euclidean projection. The conservative
representation is
\begin{equation}
    z_s^{\mathrm{safe}}
    =
    w_{\mathrm{safe}}(u_s)^\top Z.
    \label{eq:safe_prediction}
\end{equation}
The final cache-step representation is
\begin{equation}
    \tilde z_s
    =
    (1-\beta_s)\hat z_s+
    \beta_s z_s^{\mathrm{safe}},
    \label{eq:guardrail_blending}
\end{equation}
with
\begin{equation}
    \beta_s
    =
    \beta_{\max}
    \cdot
    \operatorname{clip}
    \left(
    \frac{r(u_s)-\rho}{\gamma},
    0,
    1
    \right).
    \label{eq:beta_schedule}
\end{equation}
Here \(\rho\) is the activation threshold, \(\gamma\) controls transition
smoothness, and \(\beta_{\max}\le 1\) is the maximum blending strength.

\noindent\textbf{Proposition 5 (convex-hull fallback).}
The representation \(z_s^{\mathrm{safe}}\) lies in the convex hull of the cached
representations \(\{z_{i_m}\}_{m=1}^{K}\). If \(\beta_s=1\), the final
prediction \(\tilde z_s\) equals this convex-hull fallback. If
\(0<\beta_s<1\), the final prediction is pulled toward the fallback but is not
forced to lie inside the historical convex hull.

\noindent\emph{Proof.}
By construction, \(w_{\mathrm{safe}}\in\Delta\). Therefore
\(z_s^{\mathrm{safe}}=w_{\mathrm{safe}}^\top Z\) is a convex combination of the
cached representations. The statements about \(\tilde z_s\) follow directly
from Eq.~\eqref{eq:guardrail_blending}.
% \hfill\(\square\)

% \subsection{Complexity}
% Let \(D\) be the flattened representation dimension, \(K_s\le K_{\max}\) the
% available history length at cache step \(s\), and \(n\) the number of
% B\'ezier control points. At one cache step, building the design matrix costs
% \(O(K_sn)\). Solving the \(n\times n\) ridge system costs
% \(O(n^3)\), and this part can be precomputed when the cache schedule and
% hyperparameters are fixed. The main high-dimensional operation is the weighted
% sum \(\omega_\lambda(u_s)^\top Z\), which costs \(O(K_sD)\). The guardrail adds a
% simplex projection in \(K_s\) dimensions, costing \(O(K_s\log K_s)\) with a
% sorting-based algorithm, and one additional weighted sum. The memory overhead
% is \(O(K_{\max}D)\) for cached output-proximal representations.
% Algorithm~\ref{alg:BeziCast} does not store the control
% representations persistently; prediction uses equivalent historical weights.

%% file: Contents/4exp.tex
\section{Experiments}
\label{sec:experiments}
\vspace{-0.25cm}

We conduct comprehensive experiments on two representative generative tasks using SOTA models, covering text-to-image generation with FLUX.1-dev\cite{esser2024scaling} and text-to-video generation with HunyuanVideo\cite{kong2024hunyuanvideo}. Each task follows standard evaluation protocols and widely adopted benchmarks, enabling fair and reproducible comparisons with existing methods. We present additional BeziCast configurations and component-level ablations in separate analysis subsections.

\begin{table}[t]
    \centering
    \caption{
    Benchmark results on text-to-image generation with FLUX.1 and text-to-video generation with HunyuanVideo. 
    We use 50 steps as the reference ($^{\dagger}$). 
    Our \textit{BeziCast} achieves higher speedup while maintaining better sample quality across two speedup scenarios consistently.
    }
    \label{tab:combined_t2i_t2v_comparison}
    \scriptsize
    \setlength{\tabcolsep}{2.0pt}
    \renewcommand{\arraystretch}{1.0}
    \resizebox{\linewidth}{!}{
    \begin{tabular}{lccccccc@{\hspace{0.8em}}cccccc}
    \toprule
    \multirow{3}{*}{\textbf{Method}}
    & \multicolumn{7}{c}{\textbf{FLUX.1 [dev] / DrawBench}}
    & \multicolumn{6}{c}{\textbf{HunyuanVideo / VBench}} \\
    \cmidrule(lr){2-8}
    \cmidrule(lr){9-14}
    & \multicolumn{2}{c}{\textbf{Acceleration}}
    & \multicolumn{3}{c}{\textbf{Quality}}
    & \multirow{2}{*}{\makecell[c]{\textbf{Image}\\\textbf{Reward}$\uparrow$}}
    & \multirow{2}{*}{\textbf{CLIP}$\uparrow$}
    & \multicolumn{2}{c}{\textbf{Acceleration}}
    & \multicolumn{3}{c}{\textbf{Quality}}
    & \multirow{2}{*}{\makecell[c]{\textbf{VBench}\\\textbf{Quality}$\uparrow$}} \\
    \cmidrule(lr){2-3}
    \cmidrule(lr){4-6}
    \cmidrule(lr){9-10}
    \cmidrule(lr){11-13}
    & \textbf{Latency(s)$\downarrow$}
    & \textbf{Speedup$\uparrow$}
    & \textbf{PSNR$\uparrow$}
    & \textbf{SSIM$\uparrow$}
    & \textbf{LPIPS$\downarrow$}
    &
    &
    &
    \textbf{Latency(s)$\downarrow$}
    & \textbf{Speedup$\uparrow$}
    & \textbf{PSNR$\uparrow$}
    & \textbf{SSIM$\uparrow$}
    & \textbf{LPIPS$\downarrow$}
    & \\
    \midrule
    50 steps$^{\dagger}$
    & 37.681 & 1.00$\times$ & $-$ & $-$ & $-$ & 1.052 & 27.01
    & 489.18 & 1.00$\times$ & $-$ & $-$ & $-$ & 84.03 \\
    25 steps
    & 19.583 & 1.92$\times$ & 17.350 & 0.679 & 0.228 & 0.998 & 27.23
    & 252.40 & 1.94$\times$ & 21.779 & 0.885 & 0.243 & 83.60 \\
    15 steps
    & 12.379 & 3.04$\times$ & 15.191 & 0.602 & 0.329 & 0.953 & 27.38
    & 157.06 & 3.12$\times$ & 18.894 & 0.796 & 0.382 & 83.47 \\
    \midrule
    FORA ($\mathcal{N}=4$)
    & 11.208 & 3.36$\times$ & 14.702 & 0.584 & 0.356 & 0.885 & 27.20
    & 153.67 & 3.18$\times$ & 17.892 & 0.740 & 0.421 & 83.21 \\
    ToCa ($\mathcal{N}=4$, $\mathcal{R}=0.9$)
    & 17.995 & 2.09$\times$ & 16.560 & 0.599 & 0.251 & 0.974 & 27.01
    & 336.76 & 1.46$\times$ & 17.837 & 0.736 & 0.425 & 83.21 \\
    TeaCache ($\delta=0.4/0.2$)
    & 20.349 & 1.86$\times$ & 17.100 & 0.632 & 0.228 & 1.002 & \textbf{27.32}
    & 178.64 & 2.74$\times$ & 19.869 & 0.796 & 0.371 & 83.46 \\
    TaylorSeer ($\mathcal{N}=4$, $O=1$)
    & 12.362 & 3.05$\times$ & 19.158 & 0.720 & 0.149 & \textbf{1.041} & 27.00
    & 152.68 & 3.20$\times$ & 20.629 & 0.877 & 0.281 & 82.81 \\
    TaylorSeer ($\mathcal{N}=4$, $O=2$)
    & 12.531 & 3.00$\times$ & 19.486 & 0.725 & 0.144 & 1.035 & 26.99
    & 152.87 & 3.12$\times$ & 20.475 & 0.869 & 0.290 & 82.25 \\
    HiCache ($\mathcal{N}=4$, $O=2$, $\sigma=0.5$)
    & 12.654 & 2.99$\times$ & 17.511 & 0.666 & 0.206 & 1.010 & 26.89
    & 149.63 & 3.27$\times$ & 20.866 & 0.874 & 0.274 & 83.06 \\
    \rowcolor{gray!15}
    \textit{\textbf{BeziCast}} ($\mathcal{N}=4/5$, $n=4$, $\lambda=0.5$)
    & \textbf{10.855} & \textbf{3.47}$\times$ & \textbf{22.382} & \textbf{0.782} & \textbf{0.090} & 1.035 & 27.17
    & \textbf{135.82} & \textbf{3.60}$\times$ & \textbf{24.783} & \textbf{0.916} & \textbf{0.197} & \textbf{83.48} \\
    \midrule
    FORA ($\mathcal{N}=6$)
    & 8.321 & 4.52$\times$ & 14.056 & 0.551 & 0.408 & 0.798 & 26.96
    & 116.10 & 4.21$\times$ & 16.883 & 0.652 & 0.504 & 82.55 \\
    ToCa ($\mathcal{N}=6$, $\mathcal{R}=0.9$)
    & 14.342 & 2.64$\times$ & 15.037 & 0.579 & 0.276 & 0.965 & 27.07
    & 230.32 & 2.13$\times$ & 16.861 & 0.648 & 0.504 & 82.39 \\
    TeaCache ($\delta=0.8/0.25$)
    & 12.833 & 2.94$\times$ & 15.106 & 0.621 & 0.289 & 0.960 & 27.03
    & 157.27 & 3.11$\times$ & 18.768 & 0.721 & 0.423 & 83.01 \\
    TaylorSeer ($\mathcal{N}=6$, $O=1$)
    & 8.793 & 4.31$\times$ & 15.617 & 0.621 & 0.286 & 0.940 & 26.41
    & 125.36 & 3.90$\times$ & 17.561 & 0.784 & 0.419 & 82.80 \\
    TaylorSeer ($\mathcal{N}=6$, $O=2$)
    & 8.862 & 4.27$\times$ & 15.872 & 0.598 & 0.281 & \textbf{0.988} & 26.52
    & 125.54 & 3.90$\times$ & 18.041 & 0.800 & 0.400 & 82.52 \\
    HiCache ($\mathcal{N}=6$, $O=2$, $\sigma=0.5$)
    & 9.663 & 3.90$\times$ & 15.618 & 0.585 & 0.296 & 0.971 & 26.54
    & 123.47 & 3.96$\times$ & 19.143 & 0.827 & 0.361 & 82.29 \\
    \rowcolor{gray!15}
    \textit{\textbf{BeziCast}} ($\mathcal{N}=6$, $n=4$, $\lambda=0.5$)
    & \textbf{7.881} & \textbf{4.79}$\times$ & \textbf{17.204} & \textbf{0.625} & \textbf{0.240} & 0.987 & \textbf{27.09}
    & \textbf{118.62} & \textbf{4.11}$\times$ & \textbf{23.631} & \textbf{0.904} & \textbf{0.217} & \textbf{83.21} \\
    \bottomrule
    \end{tabular}}
    \vspace{-0.5cm}
    \begin{minipage}{0.98\linewidth}
    % \small
    % $^{\dagger}$ denotes the 50-step full-sampling reference. 
    \end{minipage}
\end{table}

\subsection{Experimental Setup}

\paragraph{Models and Tasks.}
For text-to-image generation, we evaluate BeziCast on FLUX.1-dev~\cite{esser2024scaling} using a 50-step reference schedule and the 200-prompt DrawBench suite~\cite{saharia2022photorealistic}, generating images at \(1024 \times 1024\). For text-to-video generation, we use HunyuanVideo~\cite{kong2024hunyuanvideo} and follow the official VBench~\cite{huang2024vbench} prompt set and evaluation protocol. 
%For class-conditional image generation, we evaluate on ImageNet with DiT-XL/2~\cite{peebles2023scalable}.
\vspace{-0.4cm}

\paragraph{Baselines and Metrics.}
We compare BeziCast with representative training-free acceleration methods, including \(\kappa\)-step sampling, cache-reuse methods FORA~\cite{selvaraju2024fora}, ToCa~\cite{zou2024accelerating}, and TeaCache~\cite{liu2025timestep}, as well as forecasting-based methods TaylorSeer~\cite{liu2025reusing} and HiCache~\cite{fenghicache}. We use \(\mathcal{N}\) for the cache interval, \(n\) for the number of B\'ezier control representations, and \(\lambda\) for Tikhonov regularization. We report sampling latency (per-image/video) and speedup relative to the original 50-step sampler, together with task-specific quality metrics: PSNR, SSIM, LPIPS, ImageReward, and CLIP Score~\cite{hessel2021clipscore} for text-to-image generation; PSNR, SSIM, LPIPS, and VBench Quality Score for text-to-video generation.
\vspace{-0.4cm}

\begin{figure}[t]
    \centering
    \includegraphics[width=0.95\linewidth]{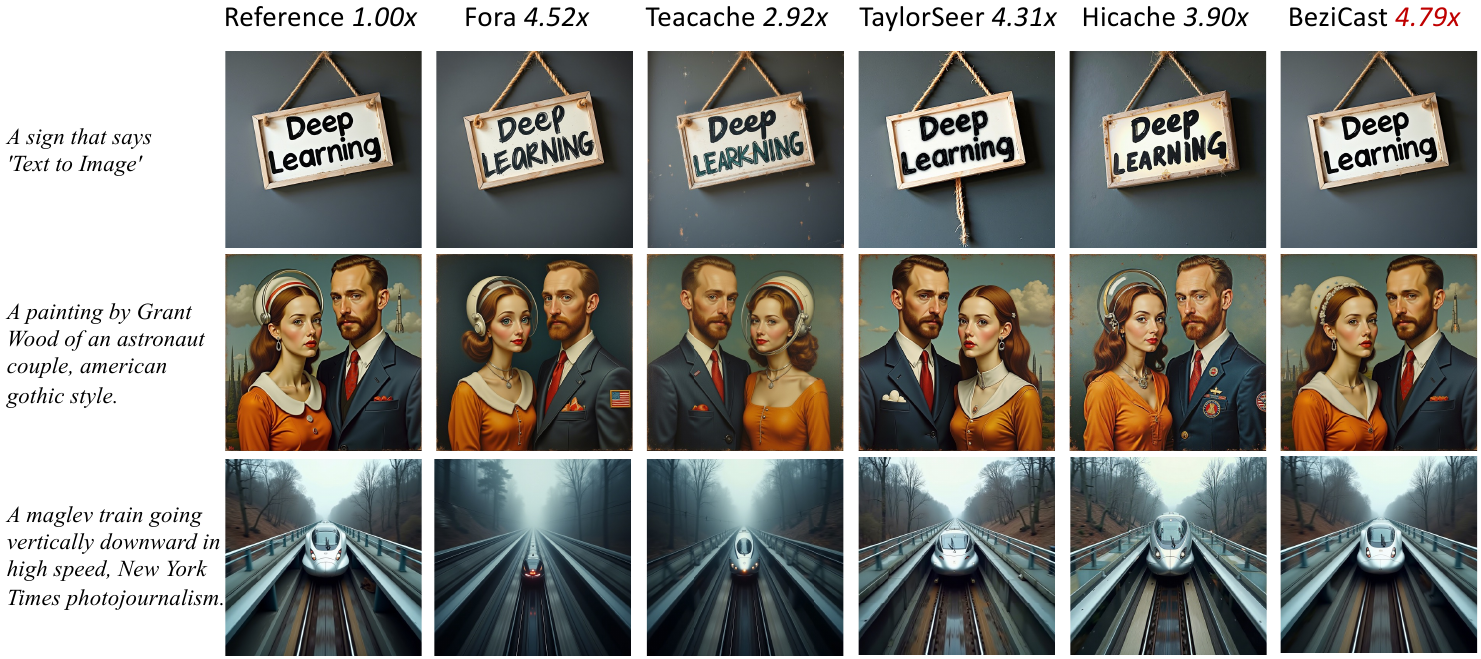}
    \caption{
    Qualitative comparison on text-to-image generation using FLUX.1. BeziCast delivers the highest speedup while preserving prompt alignment, text readability, and visual fidelity.
    }
    \label{fig:flux_qualitative}
    \vspace{-0.5cm}
\end{figure}

% \paragraph{Metrics.}
% We report both efficiency and quality metrics for all evaluated methods. Efficiency is measured by sampling latency and empirical speedup relative to the original 50-step sampler, with latency reported in seconds per image for text-to-image and class-conditional image generation, and seconds per video for text-to-video generation. For text-to-image generation, we compare accelerated samples against the 50-step reference outputs using PSNR, SSIM, and LPIPS, and additionally report ImageReward and CLIP Score~\cite{hessel2021clipscore} to measure preference alignment and text-image consistency. For text-to-video generation, we report PSNR, SSIM, LPIPS, and the VBench Quality Score (\%) following the official VBench protocol. For class-conditional image generation on ImageNet, we evaluate distribution-level quality using FID, sFID, and Inception Score.

\subsection{Main Comparisons.}
\label{sec:main_comparisons}

\begin{figure}[t]
    \centering
    \includegraphics[width=1.\linewidth]{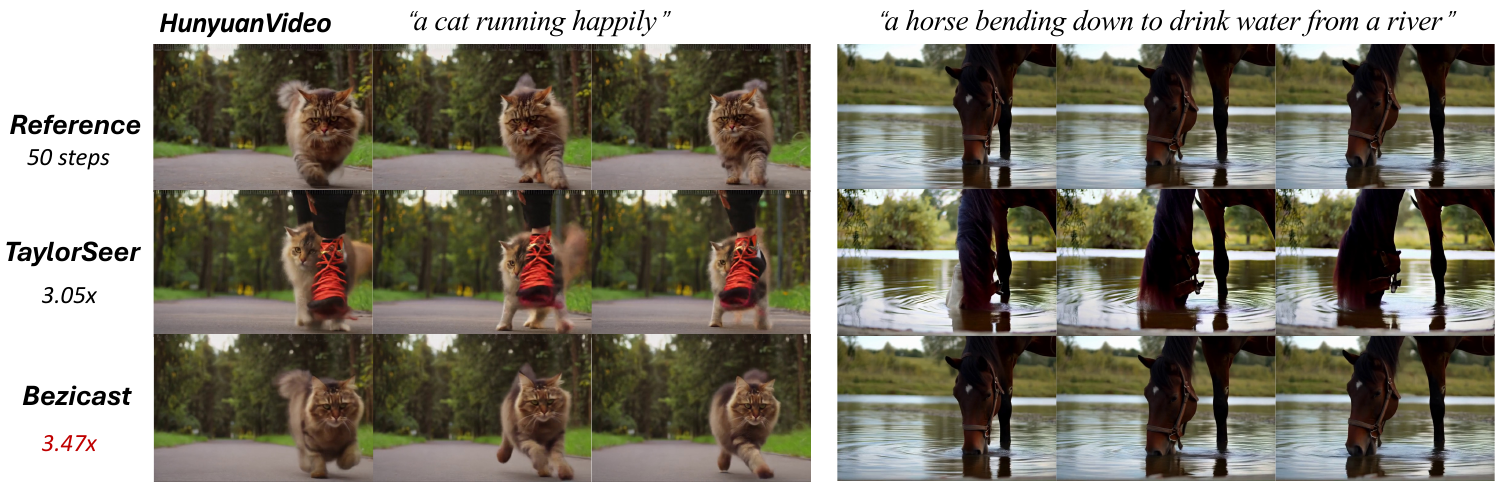}
    \vspace{-1.5em}
    \caption{
    Qualitative comparison on text-to-video generation using HunyuanVideo. BeziCast closely follow the 50-step reference, whereas TaylorSeer is slower and introduces visible artifacts.
    }
    \label{fig:hunyuan}
    \vspace{-0.6cm}
\end{figure}

\paragraph{Text-to-Image Generation.}
The left part of Table~\ref{tab:combined_t2i_t2v_comparison} reports the quantitative results on FLUX.1-dev. 
BeziCast achieves a strong trade-off between acceleration and generation quality under both caching intervals. With \(\mathcal{N}=4\), BeziCast (\(\lambda=0.5\)) uses 15 network evaluations, reaches a \(3.47\times\) speedup, and obtains the best reference-based fidelity among accelerated methods. Compared with TaylorSeer and HiCache under similar budgets, BeziCast better preserves the 50-step reference, suggesting that B\'ezier trajectory forecasting provides a more accurate approximation than Taylor-style or Hermite-based extrapolation. Under the more aggressive setting with \(\mathcal{N}=6\), BeziCast uses 12 network evaluations, achieves a \(4.79\times\) speedup, and maintains competitive quality. It outperforms direct reuse methods and forecasting baselines in reference alignment, achieving the best PSNR, SSIM, LPIPS, and ImageReward score among all \(\mathcal{N}=6\) methods. These results demonstrate the robustness of output-proximal B\'ezier forecasting for training-free text-to-image acceleration. Fig.~\ref{fig:flux_qualitative} provides a qualitative comparison. 
% BeziCast visually aligns more closely with the 50-step FLUX.1 reference, preserving text, object structure, and overall layout more faithfully than competing cache-based baselines at comparable or higher speedups.
BeziCast achieves the highest speedup while preserving prompt alignment, text legibility, and global visual fidelity comparable to the full FLUX.1 baseline.
\vspace{-0.3cm}

\paragraph{Text-to-Video Generation.}

The right part of Table~\ref{tab:combined_t2i_t2v_comparison} reports the quantitative results on HunyuanVideo using VBench.
BeziCast achieves strong text-to-video acceleration, reaching up to a \(4.11\times\) wall-clock speedup while preserving high sample fidelity. 
With \(\mathcal{N}=5\), BeziCast (\(n=4,\lambda=0.5\)) achieves a \(3.60\times\) speedup and the best overall fidelity among accelerated methods. 
Compared with TaylorSeer and HiCache at similar speedups, BeziCast better matches the full-sampling reference, suggesting that B\'ezier-based forecasting more accurately approximates latent dynamics. 
Even under the more aggressive \(\mathcal{N}=6\) setting, BeziCast maintains strong quality with a \(4.11\times\) speedup, yielding a better efficiency--fidelity trade-off than competing baselines. 
Fig.~\ref{fig:hunyuan} further shows that BeziCast more closely follows the 50-step reference and introduces fewer visible artifacts than TaylorSeer. 
%We also observe SOTA performance on class-conditional image generation with DiT-XL/2, with detailed quantitative and qualitative results provided in the appendix.

\subsection{Additional BeziCast Results and Ablations.}
\label{sec:additional_ablations}

\begin{figure}[t]
    \centering
    \includegraphics[width=0.9\linewidth]{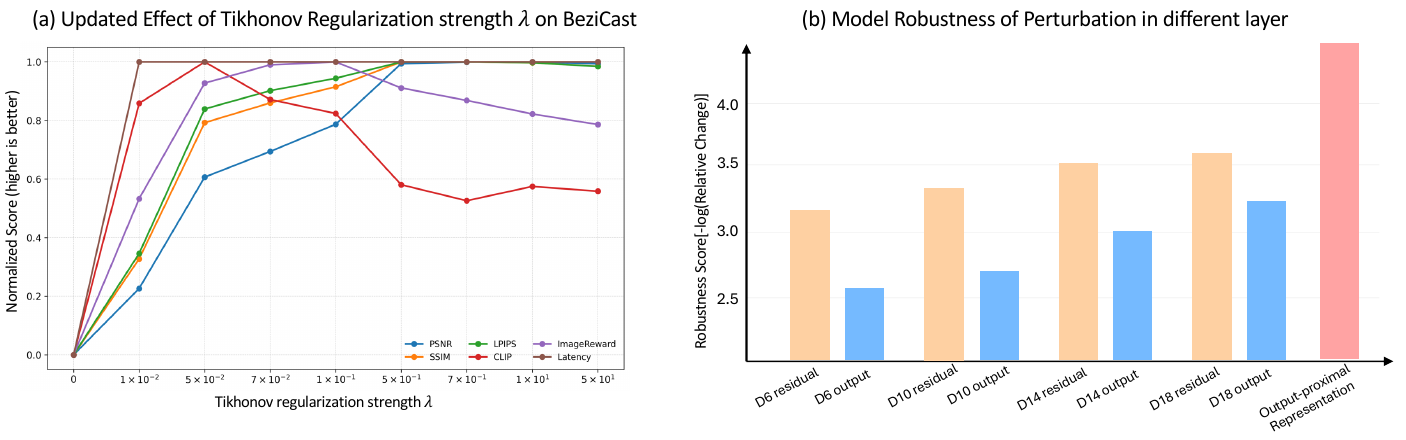}
    %\vspace{-1.5em}
    \caption{
    Ablation on Tikhonov regularization and different forecasting targets. (a)
    Effect of the Tikhonov coefficient \(\lambda\). (b) Robustness score under
    equal relative-norm perturbations injected into different FLUX
    representations; higher values indicate smaller final-output changes.
    }
    \label{fig:ablation}
    \vspace{-0.6cm}
\end{figure}

\paragraph{Effect of the ridge coefficient.}
We further study the effect of the Tikhonov coefficient \(\lambda\). As shown in
Table~\ref{tab:lambda_ablation}, small values of \(\lambda\) behave similarly to
the pseudoinverse estimator and may not sufficiently suppress extrapolation
artifacts. Increasing \(\lambda\) reduces control-point magnitude and
cache-step error, but overly large \(\lambda\) can over-shrink the trajectory,
leading to over-smoothing or loss of fine details. An intermediate
regularization strength achieves the best fidelity--stability trade-off as seen in Fig.~\ref{fig:ablation}(a).
\vspace{-0.3cm}

\begin{table}[t]
\centering
\fontsize{6.6pt}{7.5pt}\selectfont
\setlength{\tabcolsep}{1.5pt}
\renewcommand{\arraystretch}{1.0}
\newcommand{\tabstrut}{\rule[-0.75ex]{0pt}{2.65ex}}

\begin{minipage}[t]{0.485\columnwidth}
\vspace{0pt}
\centering
\caption{Effect of regularization strength \(\lambda\).}
\label{tab:lambda_ablation}
\begin{tabular*}{\linewidth}{@{\extracolsep{\fill}}c|cccccc@{}}
\toprule
\tabstrut \(\lambda\)
& PSNR \(\uparrow\)
& SSIM \(\uparrow\)
& LPIPS \(\downarrow\)
& CLIP \(\uparrow\)
& IR \(\uparrow\)
& Lat. \(\downarrow\) \\
\midrule
\tabstrut 0
& 15.425 & 0.393 & 0.523 & 26.590 & 0.758 & 15.843 \\
\tabstrut \smash{\(1{\times}10^{-2}\)}
& 17.148 & 0.524 & 0.368 & 27.331 & 0.920 & 15.208 \\
\tabstrut \smash{\(5{\times}10^{-2}\)}
& 20.043 & 0.710 & 0.147 & \textbf{27.453} & 1.040 & 15.208 \\
\tabstrut \smash{\(1{\times}10^{-1}\)}
& 21.416 & 0.759 & 0.100 & 27.301 & \textbf{1.062} & 15.208 \\
\tabstrut \smash{\(5{\times}10^{-1}\)}
& 22.993 & \textbf{0.793} & \textbf{0.075} & 27.091 & 1.035 & 15.208 \\
\tabstrut \smash{\(1{\times}10^{1}\)}
& \textbf{23.037} & 0.792 & 0.076 & 27.086 & 1.008 & 15.208 \\
\bottomrule
\end{tabular*}
\end{minipage}
\hfill
\begin{minipage}[t]{0.485\columnwidth}
\vspace{0pt}
\centering
\caption{Effect of the control point number \(n\).}
\label{tab:control_points_ablation}
\begin{tabular*}{\linewidth}{@{\extracolsep{\fill}}cc|cccccc@{}}
\toprule
\tabstrut \(W\) & \(n\)
& PSNR \(\uparrow\)
& SSIM \(\uparrow\)
& LPIPS \(\downarrow\)
& CLIP \(\uparrow\)
& IR \(\uparrow\)
& Lat. \(\downarrow\) \\
\midrule
\tabstrut 6 & 4 & 20.993 & 0.750 & 0.115 & 27.192 & \textbf{1.028} & 10.109 \\
\tabstrut 6 & 5 & 21.340 & \textbf{0.753} & \textbf{0.113} & \textbf{27.237} & 1.018 & \textbf{10.101} \\
\tabstrut 6 & 6 & \textbf{21.401} & 0.752 & \textbf{0.113} & 27.201 & 1.015 & \textbf{10.101} \\
\tabstrut 8 & 4 & 23.287 & \textbf{0.798} & \textbf{0.074} & 27.099 & 1.038 & \textbf{11.530} \\
\tabstrut 8 & 5 & 23.384 & \textbf{0.798} & 0.075 & 27.125 & 1.027 & 11.587 \\
\tabstrut 8 & 6 & \textbf{23.415} & 0.796 & 0.076 & \textbf{27.127} & \textbf{1.040} & 11.595 \\
\bottomrule
\end{tabular*}
\end{minipage}
\vspace{-0.6cm}
\end{table}

\paragraph{Perturbation Injection Diagnostic.}
We further apply the perturbation-injection diagnostic from Appendix~\ref{app:output_proximal_analysis} to measure \textit{\textbf{how errors at different forecasting targets affect the final denoising output}}. Following the local error-propagation analysis in Appendix~\ref{app:output_proximal_analysis}, intermediate-target errors must traverse the remaining downstream blocks, while output-proximal errors only pass through the final output map.
As shown in Fig.~\ref{fig:ablation}(b), the final output-proximal representation achieves the highest robustness score among all tested targets, including double-stream block residuals and outputs at layers 6, 10, 14, and 18. Intermediate residuals are generally more robust than block outputs, and later layers are more robust than earlier ones, but all remain less robust than the output-proximal representation. This supports our target choice: BeziCast does not require the final hidden state to be easier to predict, but benefits from its shorter error-propagation path to the denoising output.
\vspace{-0.4cm}

\paragraph{Equivalent-weight convex-hull guardrail}
\label{sec:ablation_guardrail}
\begin{wraptable}{r}{0.47\linewidth}
\vspace{-1.0em}
\centering
\scriptsize
\setlength{\tabcolsep}{1.6pt}
\renewcommand{\arraystretch}{0.82}
\caption{Effect of equivalent-weight convex-hull guardrails in the Flux.1.}
\label{tab:guardrail_ablation}
\vspace{-0.5em}
\resizebox{\linewidth}{!}{
\begin{tabular}{lcccccc}
\toprule
Variant 
& PSNR $\uparrow$ 
& SSIM $\uparrow$ 
& LPIPS $\downarrow$ 
& CLIP $\uparrow$ 
& Rwd. $\uparrow$ 
& Lat. $\downarrow$ \\
\midrule
None 
& 20.993 & 0.750 & 0.115 & 27.192 & 1.028 & \textbf{10.109} \\
PCA hull 
& 20.930 & 0.750 & 0.115 & 27.214 & 1.031 & 10.150 \\
Simplex 
& \textbf{21.229} 
& \textbf{0.753} 
& \textbf{0.112} 
& \textbf{27.229} 
& \textbf{1.032} 
& 10.137 \\
\bottomrule
\end{tabular}}
\vspace{-0.5cm}
\end{wraptable}
BeziCast uses equivalent history weights to diagnose extrapolation risk and constructs a conservative estimate by projecting high-risk weights onto the probability simplex. We compare three variants under the \(W=6\) FLUX schedule: no guardrail, a PCA convex-hull guardrail, and the simplex guardrail used in our main method. The PCA variant projects cached output-proximal representations into a low-dimensional PCA subspace, applies a convex-hull correction, and maps the estimate back to the original representation space. While lightweight, its behavior depends on the retained PCA subspace.
Table~\ref{tab:guardrail_ablation} shows that the simplex guardrail provides the most stable reference-fidelity improvement with only minor latency overhead. In contrast, the PCA convex-hull guardrail slightly improves some semantic metrics but yields less consistent reference-fidelity gains. These results support using the simplex guardrail by default: rather than reinterpreting \(u>1\) B\'ezier extrapolation as standard convex-hull interpolation, it pulls high-risk forecasts toward a conservative estimate when the weights contain large negative entries or deviate from the affine constraint.
\vspace{-0.4cm}

\paragraph{Number of control representations}
\label{sec:ablation_control_points}

% The number of control representations \(n\) sets the polynomial degree and
% capacity of the Bernstein/B\'ezier trajectory. In contrast, the warm-up length
% \(W\) controls how many initial solver steps are evaluated by the full denoiser
% before cache-step forecasting begins. We ablate \(n\in\{4,5,6\}\) under two
% warm-up settings, \(W=6\) and \(W=8\), while keeping the Tikhonov coefficient
% fixed at \(\lambda=5\times10^{-1}\). This experiment tests whether increasing
% trajectory capacity improves forecasting once the cache schedule is fixed.
% Table~\ref{tab:control_points_ablation} shows that increasing \(n\) gives only a
% small PSNR gain. Under \(W=6\), PSNR improves from 20.993 at \(n=4\) to 21.401
% at \(n=6\), while SSIM, LPIPS, CLIP, and ImageReward change only slightly. Under
% \(W=8\), PSNR similarly increases from 23.287 to 23.415, but SSIM and LPIPS do
% not improve. Larger \(n\) also adds a small latency cost in the \(W=8\) setting.
% We therefore use \(n=4\) as the default setting: it gives a stable cubic
% trajectory with competitive fidelity and the lowest or near-lowest latency,
% while higher-order trajectories provide only limited additional capacity in
% these schedules.

%压缩后
The number of control representations \(n\) determines the degree and capacity of the B\'ezier trajectory, while the warm-up length \(W\) controls how many initial solver steps are fully evaluated before forecasting begins. We ablate \(n\in\{4,5,6\}\) under \(W=6\) and \(W=8\), with the Tikhonov coefficient fixed at \(\lambda=5\times10^{-1}\).
As shown in Table~\ref{tab:control_points_ablation}, increasing \(n\) yields only marginal gains in reference fidelity, while the other quality metrics change little and latency may slightly increase. We therefore use \(n=4\) as the default setting, as it provides a stable cubic trajectory with competitive quality and the lowest or near-lowest latency.
\vspace{-0.4cm}

% \subsection{Discussion and Limitations}
% \label{sec:exp_discussion}

% The current results support the main design motivations of BeziCast. On FLUX, replacing TaylorSeer finite-difference forecasting with Tikhonov-stabilized Bernstein/B\'ezier forecasting improves PSNR, SSIM, and LPIPS under matched local cache schedules and usually reduces latency. On HunyuanVideo, the completed first5 and first6 settings improve both reference-fidelity metrics and VBench quality, suggesting that output-proximal B\'ezier forecasting can transfer to video generation. On DiT-XL/2, BeziCast improves FID and sFID over the local TaylorSeer baseline, although the current guardrail implementations introduce substantial overhead and require optimization.

% The main limitation of the current experimental section is protocol completeness. Several BeziCast rows have pending FLOPs or latency values, and some appended rows use local hardware/protocol settings rather than the exact HiCache reporting protocol. We therefore use the appended rows primarily to show matched comparisons against our TaylorSeer baselines and reserve fully controlled cross-paper claims until latency and FLOPs logs are finalized.

\section{Conclusion}
\vspace{-0.25cm}

% We presented BeziCast, a training-free framework for accelerating diffusion sampling by forecasting output-proximal denoising representations with low-order B\'ezier trajectories. By combining Bernstein fitting with a lightweight guardrail, BeziCast preserves the pretrained denoiser and sampler while improving the speed--quality trade-off across image and video generation tasks. Experiments on FLUX.1, HunyuanVideo, and class-conditional diffusion show that BeziCast achieves substantial acceleration with better fidelity than cache-based baselines, highlighting trajectory forecasting as a practical direction for diffusion acceleration.

We presented BeziCast, a training-free diffusion acceleration framework that forecasts output-proximal denoising representations with low-order B\'ezier trajectories. With Bernstein fitting and a guardrail, BeziCast preserves the pretrained denoiser and sampler while improving the speed--quality trade-off on image and video generation. Experiments on FLUX.1, HunyuanVideo demonstrate substantial acceleration and stronger fidelity than cache-based baselines.

%% file: Contents/appendix.tex
\appendix
\section{Technical appendices and supplementary material}
\label{app:appendix}

\subsection{Why Use a Tikhonov-Stabilized Bernstein Basis?}
\label{app:bernstein_stability}

\subsubsection{Design Motivation of Tikhonov-Stabilized Bernstein Basis}
\label{app:bernstein_design_motivation}

The temporal basis determines how a cache-then-forecast method turns a short
history of computed denoising representations into predictions at skipped
timesteps~\cite{liu2025reusing,fenghicache,han2026spectrum}. Taylor-style
predictors estimate local finite differences and extrapolate them in a
Newton-form power basis~\cite{liu2025reusing,fenghicache}.
This design is effective for short horizons, but it couples two sources of
sensitivity: finite differences first form alternating high-order derivative
estimates from nearby cached features, and extrapolation then multiplies these
estimates by coefficients that grow with the forecast offset and expansion
order. Proposition A.1 in Appendix~\ref{app:bernstein_stability} makes this
mechanism explicit: for an
order-\(p\) backward finite-difference predictor at future offset \(q\), bounded
cache perturbations are amplified by a factor upper bounded by
\(\sum_{j=0}^{p}|\binom{q}{j}|2^j\) in
Eq.~\eqref{eq:app_taylor_perturb_bound}, whose leading term grows as a
degree-\(p\) polynomial in \(q\).

BeziCast replaces direct reuse and high-order finite-difference
extrapolation with a regularized local trajectory fit. It estimates a low-order
Bernstein/B\'ezier curve from recent output-proximal representations and
evaluates the fitted curve at cache-step coordinates. A B\'ezier curve is
defined by a small set of control points and Bernstein basis
functions~\cite{farouki1987numerical}; once the control points are fixed, the
curve can be evaluated analytically at any normalized time
coordinate~\cite{patrikalakis2009shape}. Thus, the number of control points
sets trajectory capacity, while the cache schedule only determines how often
the trajectory is queried.

Cache forecasting differs from standard B\'ezier interpolation because future
full-step representations are unavailable during online inference. At a cache
step, the method observes only previously computed full steps, so the query is
often a short-range extrapolation beyond the most recent full step, i.e.,
\(u>1\). We therefore do not use the in-interval convex-hull property of
B\'ezier curves as a direct correctness guarantee. Instead, we use the
Bernstein/B\'ezier form as a smooth low-order trajectory family and explicitly
control extrapolation through Tikhonov regularization, query-range clipping,
and the equivalent-weight guardrail introduced in
Section~\ref{sec:convex_hull_guardrail}.

The Bernstein basis provides the geometric structure behind this design. On
the standard interval \(u\in[0,1]\), Bernstein basis functions are nonnegative
and form a partition of unity, so the corresponding B\'ezier curve lies in the
convex hull of its control points~\cite{patrikalakis2009shape}. Farouki and
Goodman further show that the Bernstein basis has optimal stability among
nonnegative polynomial bases on a fixed interval~\cite{farouki1996optimal}.
Although cache prediction may query \(u>1\), these properties remain useful as
a diagnostic lens: after Tikhonov fitting, each prediction can be written as an
equivalent linear combination of historical representations. If the equivalent
weights become heavily signed or deviate from an affine combination, the
forecast is relying on aggressive extrapolation rather than a conservative
combination of observed features. This observation motivates both the
regularized fitting below and the coefficient-space guardrail that follows~\cite{li2025beyond,li2024freestyleret}.

\subsubsection{External Propositions and Proofs}

This section formalizes the temporal-basis motivation in Section~\ref{sec:bezier_design} and Appendix~\ref{app:bernstein_design_motivation}. We first derive a perturbation bound for Taylor-style finite-difference extrapolation, and then show that the ridge-stabilized Bernstein/B\'ezier predictor admits a regularization-controlled stability bound.

\paragraph{Perturbation amplification in Taylor-style forecasting.}
Consider a scalar feature trajectory observed at equally spaced historical full steps,
\begin{equation}
    z_0,z_{-1},\ldots,z_{-p}.
\end{equation}
The vector-valued case follows by applying the same argument coordinate-wise. For a future offset \(q>0\), write the order-\(p\) backward finite-difference predictor as
\begin{equation}
    \hat z_{\mathrm{Tay}}(q)
    =
    \sum_{j=0}^{p}
    \binom{q}{j}\nabla^j z_0,
    \label{eq:app_taylor_newton}
\end{equation}
where \(\nabla^j\) denotes the \(j\)-th backward difference.

\noindent\textbf{Proposition A.1 (Taylor perturbation).}
If each cached value is perturbed by \(\eta_{-r}\) with
\(|\eta_{-r}|\le\varepsilon\), then the induced perturbation of
Eq.~\eqref{eq:app_taylor_newton} satisfies
\begin{equation}
    |\delta \hat z_{\mathrm{Tay}}(q)|
    \le
    \varepsilon
    \sum_{j=0}^{p}
    \left|\binom{q}{j}\right|2^j.
    \label{eq:app_taylor_perturb_bound}
\end{equation}
For fixed order \(p\), the leading factor in this upper bound grows as a
degree-\(p\) polynomial in the forecast offset \(q\).

\noindent\emph{Proof.}
The \(j\)-th backward difference can be expanded as
\begin{equation}
    \nabla^j z_0
    =
    \sum_{r=0}^{j}
    (-1)^r\binom{j}{r}z_{-r},
\end{equation}
so Eq.~\eqref{eq:app_taylor_newton} can be written as a linear extrapolation over
historical values,
\begin{equation}
    \hat z_{\mathrm{Tay}}(q)
    =
    \sum_{r=0}^{p}
    w_r^{\mathrm{Tay}}(q)z_{-r}.
\end{equation}
By linearity, the perturbation is
\(\delta\hat z_{\mathrm{Tay}}(q)=\sum_{r=0}^{p}w_r^{\mathrm{Tay}}(q)\eta_{-r}\).
Therefore,
\begin{equation}
    |\delta \hat z_{\mathrm{Tay}}(q)|
    \le
    \Lambda_p^{\mathrm{Tay}}(q)\varepsilon,
    \qquad
    \Lambda_p^{\mathrm{Tay}}(q)
    =
    \sum_{r=0}^{p}|w_r^{\mathrm{Tay}}(q)|.
\end{equation}
Using the triangle inequality before collecting the coefficients gives
\begin{equation}
    \Lambda_p^{\mathrm{Tay}}(q)
    \le
    \sum_{j=0}^{p}
    \left|\binom{q}{j}\right|2^j.
\end{equation}
The asymptotic statement follows because
\(\binom{q}{p}\) is a degree-\(p\) polynomial in \(q\).

This proposition identifies a perturbation mechanism rather than a universal
failure mode. Taylor-style predictors can work well for short extrapolation
offsets or low expansion orders, but the finite-difference form allows small
cache errors to be magnified when the cache interval is aggressive.

\paragraph{Stabilized Bernstein/B\'ezier forecasting.}
BeziCast instead estimates a low-order Bernstein/B\'ezier trajectory
from cached output-proximal representations. Let
\begin{equation}
    Z=
    \begin{bmatrix}
    z_{i_1}^{\top}\\
    \vdots\\
    z_{i_K}^{\top}
    \end{bmatrix}
    \in\mathbb{R}^{K\times D}
\end{equation}
stack the \(K\) cached representations. Let \(n\) be the number of B\'ezier
control representations, so the polynomial degree is \(n-1\). For normalized
coordinate \(u\), define the Bernstein basis vector \(b(u)\in\mathbb{R}^n\) by
\begin{equation}
    b_{j,n-1}(u)
    =
    \binom{n-1}{j}
    u^j(1-u)^{n-1-j},
    \qquad
    j=0,\ldots,n-1,
\end{equation}
\begin{equation}
    b(u)
    =
    [b_{0,n-1}(u),\ldots,b_{n-1,n-1}(u)]^{\top}
    \in\mathbb{R}^{n}.
\end{equation}
Let the Bernstein design matrix \(A\in\mathbb{R}^{K\times n}\) be
\begin{equation}
    A_{m,j}=b_{j,n-1}(u_m)
\end{equation}
for historical coordinates \(u_1,\ldots,u_K\). The control matrix
\(P\in\mathbb{R}^{n\times D}\) is estimated by Tikhonov-regularized least
squares,
\begin{equation}
    P_\lambda^\star
    =
    \arg\min_P
    \|AP-Z\|_F^2+
    \lambda\|\Gamma P\|_F^2,
    \label{eq:app_tikhonov_objective}
\end{equation}
where \(\lambda>0\) controls the regularization strength and \(\Gamma\) is the
Tikhonov matrix. If \(A^\top A+\lambda\Gamma^\top\Gamma\) is invertible, the
closed-form solution of Eq.~\eqref{eq:app_tikhonov_objective} is
\begin{equation}
    P_\lambda^\star
    =
    \left(A^\top A+\lambda\Gamma^\top\Gamma\right)^{-1}
    A^\top Z.
    \label{eq:app_tikhonov_solution_general}
\end{equation}
In the main implementation, we use \(\Gamma=I\), yielding the ridge-stabilized
solution
\begin{equation}
    P_\lambda^\star
    =
    (A^\top A+\lambda I)^{-1}A^\top Z.
    \label{eq:app_ridge_solution}
\end{equation}
Since \(A^\top A+\lambda I\) is positive definite for \(\lambda>0\), this
estimator remains well posed even when \(A\) is rank-deficient or
ill-conditioned. The prediction at query coordinate \(u\) is
\begin{equation}
    \hat z_{\mathrm{Bez}}(u)
    =
    b(u)^\top P_\lambda^\star
    =
    b(u)^\top(A^\top A+\lambda I)^{-1}A^\top Z.
    \label{eq:app_bezier_prediction}
\end{equation}

\noindent\textbf{Proposition A.2 (ridge Bernstein stability).}
For the ridge-stabilized predictor in Eq.~\eqref{eq:app_bezier_prediction},
any cache perturbation \(\delta Z\) induces
\begin{equation}
    \|\delta \hat z_{\mathrm{Bez}}(u)\|_F
    \le
    \frac{\|b(u)\|_2}{2\sqrt{\lambda}}
    \|\delta Z\|_F.
    \label{eq:app_bezier_perturb_general}
\end{equation}
Moreover, for the clipped extrapolation range \(u\in[0,u_{\max}]\), define
\begin{equation}
    C_b(u_{\max},n)
    =
    \sup_{u\in[0,u_{\max}]}
    \|b(u)\|_2.
\end{equation}
Then
\begin{equation}
    \|\delta \hat z_{\mathrm{Bez}}(u)\|_F
    \le
    \frac{C_b(u_{\max},n)}{2\sqrt{\lambda}}
    \|\delta Z\|_F,
    \qquad
    u\in[0,u_{\max}].
    \label{eq:app_bezier_perturb_clipped}
\end{equation}

\noindent\emph{Proof.}
Because Eq.~\eqref{eq:app_bezier_prediction} is linear in \(Z\),
\begin{equation}
    \delta \hat z_{\mathrm{Bez}}(u)
    =
    b(u)^\top(A^\top A+\lambda I)^{-1}A^\top\delta Z.
\end{equation}
Hence
\begin{equation}
    \|\delta \hat z_{\mathrm{Bez}}(u)\|_F
    \le
    \|b(u)\|_2
    \left\|
    (A^\top A+\lambda I)^{-1}A^\top
    \right\|_2
    \|\delta Z\|_F.
\end{equation}
Let \(A=U\Sigma V^\top\) be the singular value decomposition of \(A\). Then
\begin{equation}
    \left\|
    (A^\top A+\lambda I)^{-1}A^\top
    \right\|_2
    =
    \max_i
    \frac{\sigma_i}{\sigma_i^2+\lambda}
    \le
    \frac{1}{2\sqrt{\lambda}},
    \qquad \lambda>0.
\end{equation}
This proves Eq.~\eqref{eq:app_bezier_perturb_general}; taking the supremum of
\(\|b(u)\|_2\) over \(u\in[0,u_{\max}]\) gives
Eq.~\eqref{eq:app_bezier_perturb_clipped}.

The bound gives the stability rationale for the Tikhonov-stabilized
Bernstein/B\'ezier predictor. Its perturbation sensitivity is controlled by the
regularization coefficient and the bounded query range, rather than by
finite-difference order or a polynomial factor in the forecast offset. This
does not imply that Bernstein/B\'ezier forecasting is always more accurate than
Taylor forecasting; it states why the stabilized predictor is appropriate under
aggressive cache intervals.

\paragraph{Why not ordinary least squares?}
The preceding bound depends on the Tikhonov term. Without this term, the
Bernstein control points are estimated by ordinary least squares (OLS),
\begin{equation}
    P_{\mathrm{OLS}}^\star=A^+Z,
    \label{eq:app_ols_solution}
\end{equation}
where \(A^+\) is the Moore--Penrose pseudoinverse. When \(A\) has full column
rank, Eq.~\eqref{eq:app_ols_solution} reduces to
\begin{equation}
    P_{\mathrm{OLS}}^\star=(A^\top A)^{-1}A^\top Z.
\end{equation}
The induced prediction at query coordinate \(u\) is
\begin{equation}
    \hat z_{\mathrm{OLS}}(u)
    =
    b(u)^\top A^+Z.
\end{equation}

\noindent\textbf{Proposition A.3 (OLS versus ridge sensitivity).}
Fix the Bernstein design matrix \(A\) and query coordinate \(u\). Let
\(A=U\Sigma V^\top\) have rank \(r\) and nonzero singular values
\(\sigma_1\ge\cdots\ge\sigma_r>0\). For any cache perturbation \(\delta Z\),
the OLS predictor satisfies
\begin{equation}
    \|\delta \hat z_{\mathrm{OLS}}(u)\|_F
    \le
    \frac{\|b(u)\|_2}{\sigma_r}
    \|\delta Z\|_F,
    \label{eq:app_ols_perturb_bound}
\end{equation}
whereas the ridge-stabilized predictor with \(\lambda>0\) satisfies
\begin{equation}
    \|\delta \hat z_{\lambda}(u)\|_F
    \le
    \frac{\|b(u)\|_2}{2\sqrt{\lambda}}
    \|\delta Z\|_F.
    \label{eq:app_ridge_perturb_bound}
\end{equation}

\noindent\emph{Proof.}
For OLS, linearity gives
\begin{equation}
    \delta \hat z_{\mathrm{OLS}}(u)
    =
    b(u)^\top A^+\delta Z.
\end{equation}
Using submultiplicativity of the Frobenius norm,
\begin{equation}
    \|\delta \hat z_{\mathrm{OLS}}(u)\|_F
    \le
    \|b(u)\|_2\|A^+\|_2\|\delta Z\|_F.
\end{equation}
Since \(\|A^+\|_2=1/\sigma_r\), Eq.~\eqref{eq:app_ols_perturb_bound} follows.
For the ridge estimator,
\begin{equation}
    P_{\lambda}^\star
    =
    (A^\top A+\lambda I)^{-1}A^\top Z,
\end{equation}
and the same argument yields
\begin{equation}
    \|\delta \hat z_{\lambda}(u)\|_F
    \le
    \|b(u)\|_2
    \left\|(A^\top A+\lambda I)^{-1}A^\top\right\|_2
    \|\delta Z\|_F.
\end{equation}
In the singular-vector basis, the spectral filter of
\((A^\top A+\lambda I)^{-1}A^\top\) is
\begin{equation}
    g_\lambda(\sigma_i)
    =
    \frac{\sigma_i}{\sigma_i^2+\lambda}.
\end{equation}
Therefore,
\begin{equation}
    \left\|(A^\top A+\lambda I)^{-1}A^\top\right\|_2
    =
    \max_i g_\lambda(\sigma_i)
    \le
    \sup_{\sigma\ge 0}\frac{\sigma}{\sigma^2+\lambda}
    =
    \frac{1}{2\sqrt{\lambda}},
\end{equation}
which proves Eq.~\eqref{eq:app_ridge_perturb_bound}.

This proposition isolates the role of Tikhonov stabilization. The OLS
sensitivity depends on \(1/\sigma_r\), which can be large when the Bernstein
design matrix is ill-conditioned or nearly rank-deficient. By contrast, the
ridge predictor replaces this data-dependent factor with the explicit
regularization-controlled factor \(1/(2\sqrt{\lambda})\). In each nonzero
singular direction, OLS applies the filter \(1/\sigma_i\), while ridge applies
\(\sigma_i/(\sigma_i^2+\lambda)\). Their ratio is
\begin{equation}
    \frac{\sigma_i/(\sigma_i^2+\lambda)}{1/\sigma_i}
    =
    \frac{\sigma_i^2}{\sigma_i^2+\lambda}
    \le 1,
\end{equation}
so ridge attenuates directions that are weakly observed by the cache design.
Thus, Tikhonov fitting preserves the Bernstein/B\'ezier trajectory class but
stabilizes the estimation of its control representations.

This stability is important for training-free cache-step forecasting. The
history window contains only a few full denoiser evaluations, adjacent
denoising representations are highly correlated, and cache queries are often
extrapolative (\(u>1\)). In this regime, OLS may fit small cache perturbations
or local fluctuations too closely, and these perturbations can be propagated by
extrapolative Bernstein weights. Tikhonov fitting instead trades exact
interpolation of the cached history for a smoother and better-conditioned
control-point estimate.

The same conclusion can be viewed through the standard bias--variance
decomposition. Suppose the cached representations follow a local model
\begin{equation}
    Z=AP^\star+E,
\end{equation}
where \(P^\star\) is an ideal local control-point matrix and \(E\) denotes cache
noise or approximation error. If \(A\) has full column rank, then
\begin{equation}
    P_{\mathrm{OLS}}^\star-P^\star
    =
    A^+E,
\end{equation}
whereas ridge gives
\begin{equation}
    P_{\lambda}^\star-P^\star
    =
    -\lambda(A^\top A+\lambda I)^{-1}P^\star
    +
    (A^\top A+\lambda I)^{-1}A^\top E.
\end{equation}
The first term is the shrinkage bias introduced by ridge, and the second term
is the noise-driven estimation error. The Tikhonov term damps the
small-singular-value components of the noise term at the cost of this controlled
bias. Since the goal of BeziCast is stable extrapolation at skipped
steps rather than exact reconstruction of the cached history, this trade-off is
better aligned with the inference-time forecasting objective.

\subsection{Why Forecast Output-Proximal Representations?}
\label{app:output_proximal_analysis}

This subsection explains why BeziCast forecasts an output-proximal
representation \(z_s\) rather than intermediate module outputs. We first give a
local error-propagation analysis. We then report ablations on the forecasting
target, perturbation sensitivity, and the step at which output-proximal
forecasting is activated.

\subsubsection{Theoretical Analysis}
\label{app:output_proximal_theory}

The key distinction is the error-injection point. An intermediate prediction
error must pass through the remaining denoiser blocks before reaching the final
noise estimate, whereas an output-proximal error only passes through the final
output map.

\paragraph{Intermediate-module error propagation.}
Consider an intermediate representation \(h_s^\ell\) inside the denoiser. Let
the downstream computation after layer \(\ell\) be
\begin{equation}
    h_s^{r+1}
    =
    \Phi_r(h_s^r,t_s,c),
    \qquad
    r=\ell,\ldots,L-1,
\end{equation}
with final denoising output
\begin{equation}
    \epsilon_s
    =
    \mathcal{O}_\theta(h_s^L,t_s,c).
\end{equation}

\noindent\textbf{Proposition A.4 (intermediate-module error propagation).}
Suppose the downstream blocks \(\Phi_\ell,\ldots,\Phi_{L-1}\) have local
Lipschitz constants \(L_\ell,\ldots,L_{L-1}\), and the output map
\(\mathcal{O}_\theta\) has local Lipschitz constant \(L_{\mathcal{O}}\). If the
intermediate representation is replaced by
\begin{equation}
    \hat h_s^\ell=h_s^\ell+e_s^\ell,
\end{equation}
then the induced denoising error satisfies
\begin{equation}
    \|\delta\epsilon_s\|
    \le
    L_{\mathcal{O}}
    \prod_{r=\ell}^{L-1}L_r
    \|e_s^\ell\|.
    \label{eq:app_local_error_propagation}
\end{equation}

\noindent\emph{Proof.}
The denoising error caused by replacing \(h_s^\ell\) with \(\hat h_s^\ell\) is
\begin{equation}
\begin{aligned}
    \delta\epsilon_s
    ={}&
    \mathcal{O}_\theta\circ\Phi_{L-1}\circ\cdots\circ\Phi_{\ell}
    (h_s^\ell+e_s^\ell) \\
    &-
    \mathcal{O}_\theta\circ\Phi_{L-1}\circ\cdots\circ\Phi_{\ell}
    (h_s^\ell).
\end{aligned}
\end{equation}
Applying the local Lipschitz bounds sequentially through the composed
downstream map gives Eq.~\eqref{eq:app_local_error_propagation}.

This proposition does not state that intermediate errors are always amplified.
It states that their effect on the final denoising output depends on the local
Jacobian path from the injection point to the output. Under the same local
Lipschitz assumptions, a corresponding bound for multiple predicted modules can
be obtained by accumulating their downstream-modulated terms.

\paragraph{Output-proximal error propagation.}
The output-proximal representation \(z_s\) in Eq.~\eqref{eq:denoiser_decomposition}
is placed immediately before the final output map:
\begin{equation}
    \epsilon_s
    =
    \mathcal{O}_\theta(z_s,t_s,c).
\end{equation}

\noindent\textbf{Proposition A.5 (output-proximal error propagation).}
Assume the output map \(\mathcal{O}_\theta\) has local Lipschitz constant
\(L_{\mathcal{O}}\) with respect to its representation input. If
\begin{equation}
    \hat z_s=z_s+e_s^z,
\end{equation}
then
\begin{equation}
    \|\mathcal{O}_\theta(\hat z_s,t_s,c)-\mathcal{O}_\theta(z_s,t_s,c)\|
    \le
    L_{\mathcal{O}}\|e_s^z\|.
    \label{eq:app_output_proximal_error}
\end{equation}

\noindent\emph{Proof.}
Equation~\eqref{eq:app_output_proximal_error} follows directly from the local
Lipschitz assumption on \(\mathcal{O}_\theta\).

Compared with Eq.~\eqref{eq:app_local_error_propagation},
Eq.~\eqref{eq:app_output_proximal_error} removes the additional product of
downstream block constants. This does not imply that \(z_s\) is always easier to
forecast than intermediate activations. Rather, it makes the relationship
between representation prediction error and denoising-output error more direct,
which is the property used by BeziCast.

\paragraph{Sampler-level consequence.}
The denoising error then enters the diffusion sampler. Let the full and
forecasted trajectories use the same solver update
\begin{equation}
    x_{s+1}=\Psi_s(x_s,\epsilon_s),
    \qquad
    \hat x_{s+1}=\Psi_s(\hat x_s,\hat\epsilon_s).
\end{equation}

\noindent\textbf{Proposition A.6 (sampler-level error propagation).}
Assume \(\Psi_s\) is locally Lipschitz in both arguments:
\begin{equation}
    \|\Psi_s(x,\epsilon)-\Psi_s(x',\epsilon')\|
    \le
    L_s^x\|x-x'\|+L_s^\epsilon\|\epsilon-\epsilon'\|.
\end{equation}
If both trajectories start from the same initial latent, then
\begin{equation}
    \|\hat x_T-x_T\|
    \le
    \sum_{s=0}^{T-1}
    \left(
    L_s^\epsilon
    \prod_{r=s+1}^{T-1}L_r^x
    \right)
    \|\hat\epsilon_s-\epsilon_s\|.
    \label{eq:app_sampler_error_unroll}
\end{equation}

\noindent\emph{Proof.}
Let \(d_s=\|\hat x_s-x_s\|\) and
\(\delta_s=\|\hat\epsilon_s-\epsilon_s\|\). The Lipschitz condition gives
\begin{equation}
    d_{s+1}\le L_s^x d_s+L_s^\epsilon\delta_s.
\end{equation}
Unrolling this recursion from \(d_0=0\) yields
Eq.~\eqref{eq:app_sampler_error_unroll}.

This sampler-level bound explains why reducing denoising-output error is a
natural local objective for cache-step forecasting. Output-proximal prediction
does not remove sampler error accumulation, but it gives a shorter and more
direct path from the forecasted representation to the denoising output that the
sampler actually consumes.

\subsubsection{Ablation: Forecasting Target}
\label{app:ablation_target}

We first ablate the representation used as the forecasting target. To isolate
the target choice, we keep the pretrained FLUX denoiser, sampler, cache
schedule, history length, Bernstein order, Tikhonov coefficient, and guardrail
configuration fixed, and vary only the predicted representation.

We compare three target types. \textbf{Block-wise forecasting} applies the same
B\'ezier forecaster to local transformer-block outputs, resembling local-module
feature caching where prediction errors are injected before subsequent blocks.
\textbf{Residual forecasting} predicts the block residual/update representation,
following the residual-style target used in prior caching methods. In our
implementation, the residual is defined as the difference between the block
output and the block input. \textbf{Output-proximal forecasting} predicts the
final hidden representation \(z_s\) immediately before the output map
\(\mathcal O_\theta\), which is the default design of BeziCast.

\begin{table}[t]
\centering
\caption{
Ablation on forecasting targets on FLUX. All variants use the same pretrained
denoiser, sampler, cache schedule, history length, Bernstein order, Tikhonov
coefficient, and guardrail configuration.
}
\label{tab:app_target_ablation}
\resizebox{\linewidth}{!}{
\begin{tabular}{lcccccc}
\toprule
Forecasting target
& PSNR \(\uparrow\)
& SSIM \(\uparrow\)
& LPIPS \(\downarrow\)
& CLIP \(\uparrow\)
& ImageReward \(\uparrow\)
& Latency \(\downarrow\) \\
\midrule
Block-wise forecasting
& 22.993 & 0.793 & 0.075 & 27.091 & 1.035 & 15.208 \\
Residual forecasting
& \textbf{23.289} & 0.798 & 0.074 & 27.097 & 1.026 & 14.016 \\
\textbf{Output-proximal forecasting} \(z_s\)
& 23.287 & \textbf{0.798} & \textbf{0.074} & \textbf{27.099} & \textbf{1.038} & \textbf{11.530} \\
\bottomrule
\end{tabular}}
\end{table}

Table~\ref{tab:app_target_ablation} shows that output-proximal forecasting gives the
best latency while matching the reference fidelity of residual forecasting. It
has nearly identical PSNR, SSIM, and LPIPS to residual forecasting, but reduces
latency from 14.016s to 11.530s and gives the best CLIP and ImageReward scores.
Compared with block-wise forecasting, it improves all listed quality metrics and
also reduces latency. These results indicate that the final hidden
representation remains predictable enough for cache-step forecasting, while
avoiding the cost of predicting multiple intermediate targets.

\paragraph{Perturbation injection diagnostic.}
We next measure how a fixed relative representation error affects the final
denoising output. For a candidate target \(q\) with representation \(h_s^q\), let
\(\mathcal G_q\) denote the downstream computation from that representation to
the denoising output. We inject a random perturbation with relative magnitude
\(\sigma\):
\[
\tilde h_s^q
=
h_s^q+\sigma\frac{\xi}{\|\xi\|_2}\|h_s^q\|_2,
\]
where \(\xi\) is random noise and \(\sigma=0.01\). We then continue the forward
pass from \(\tilde h_s^q\) and measure
\[
\Delta_\epsilon(q)
=
\frac{
\|\mathcal G_q(\tilde h_s^q)-\mathcal G_q(h_s^q)\|_2
}{
\|\mathcal G_q(h_s^q)\|_2
}.
\]
A smaller \(\Delta_\epsilon(q)\) means that the same relative target error
induces a smaller final-output deviation.

We compare the final hidden state with double-stream block residuals and
double-stream block outputs at layers 6, 10, 14, and 18. We focus on
double-stream blocks only, to avoid mixing structurally different block types.
Figure~\ref{fig:ablation}(b) shows that the final hidden state has the
smallest output deviation under the same relative perturbation. Block residuals
are less sensitive than block outputs at the same layer, and later blocks are
generally less sensitive than earlier ones, but all probed block-level targets
remain more sensitive than the final hidden representation. This matches the
analysis above: intermediate errors pass through additional downstream
computation, while the output-proximal target has the shortest path to the
denoising output.

% \begin{figure}[t]
% \centering
% \includegraphics[width=\linewidth]{imgs/target_perturbation_diagnostic.png}
% \caption{
% Perturbation sensitivity on FLUX. Equal relative-norm perturbations
% \((\sigma=0.01)\) induce the smallest final-output change when injected into the
% final hidden state.
% }
% \label{fig:target_perturbation}
% \end{figure}

\paragraph{When to switch to output-proximal forecasting.}
Finally, we test whether output-proximal forecasting should start immediately
after warm-up or only in later denoising steps. The first 8 solver steps are
full denoiser evaluations in all variants. Before the switch step, cache steps
use the block-wise target; from the switch step onward, cache steps use the
output-proximal target \(z_s\). All variants use the same cache schedule,
\(\lambda=5\times10^{-1}\), and \(n=6\).

\begin{table}[t]
\centering
\caption{
Ablation on when to switch to output-proximal forecasting. Steps are
0-indexed; the first 8 steps are always full denoiser evaluations. Switching at
the first cache step gives the best latency and overall fidelity in this
setting.
}
\label{tab:target_switch_ablation}
\resizebox{\linewidth}{!}{
\begin{tabular}{lcccccc}
\toprule
Output-proximal switch step
& PSNR \(\uparrow\)
& SSIM \(\uparrow\)
& LPIPS \(\downarrow\)
& CLIP \(\uparrow\)
& ImageReward \(\uparrow\)
& Latency \(\downarrow\) \\
\midrule
25
& 23.199 & 0.797 & 0.074 & 27.100 & 1.031 & 13.146 \\
20
& 23.229 & 0.797 & 0.074 & \textbf{27.111} & 1.030 & 12.622 \\
15
& 23.243 & 0.797 & 0.075 & 27.101 & 1.027 & 12.176 \\
10
& 23.274 & \textbf{0.798} & 0.075 & 27.081 & 1.026 & 11.749 \\
0 / before first cache step
& \textbf{23.287} & \textbf{0.798} & \textbf{0.074} & 27.099 & \textbf{1.038} & \textbf{11.530} \\
\bottomrule
\end{tabular}}
\end{table}

Table~\ref{tab:target_switch_ablation} shows that activating
output-proximal forecasting from the beginning of the cache phase gives the
lowest latency and the strongest overall fidelity among the tested switch
points. Delaying the switch does not improve PSNR, SSIM, or LPIPS, and it
increases latency because more cache steps use the more expensive block-wise
target. This result is consistent with the target ablation: after a short
full-step warm-up, the output-proximal representation is already a suitable
forecasting target. We therefore use output-proximal forecasting from the first
cache step in the main experiments.

\subsection{Warm-up and Fixed-period Full-step Schedule}
\label{app:warmup_fixed_schedule}

This subsection first defines the deterministic full-step schedule used by
BeziCast, and then reports an ablation on the warm-up length. The schedule has
two roles: the initial warm-up provides real output-proximal representations for
the first local fit, and the later fixed-period full steps refresh the history
used by cache-step forecasting.

\subsubsection{Method Details}
\label{app:warmup_schedule_method}
BeziCast uses a warm-up plus fixed-period full-step schedule. The warm-up
length \(W\) is the number of initial solver steps evaluated by the full
denoiser, and the fixed period \(R\) specifies how often a full denoiser
evaluation is inserted after warm-up. At full steps, BeziCast computes and
caches the true output-proximal representation; at cache steps, it predicts the
current representation from the most recent cached full-step representations.

For a determined or over-determined control-point fit, the warm-up length should
provide at least as many historical full-step representations as the number of
Bernstein basis functions. Let \(n\) denote the number of B\'ezier control
representations. The default well-posed setting therefore uses
\begin{equation}
    W\ge n.
    \label{eq:warmup_requirement}
\end{equation}
In the main configuration, BeziCast uses a cubic Bernstein/B\'ezier forecaster
with \(n=4\), matching the control-representation notation used in the
experimental tables:
\begin{equation}
\begin{aligned}
    b_{0,3}(u)&=(1-u)^3, &
    b_{1,3}(u)&=3u(1-u)^2,\\
    b_{2,3}(u)&=3u^2(1-u), &
    b_{3,3}(u)&=u^3.
\end{aligned}
\end{equation}
Thus the default well-posed cubic setting uses \(W\ge4\). The ridge system is
still algebraically defined when fewer than \(n\) full-step representations are
available, because the Tikhonov term makes the normal matrix invertible. We
therefore also evaluate the aggressive \(W=3\) setting in the ablation below,
where the goal is to measure the speed--fidelity trade-off near the
well-posedness boundary.

After warm-up, full denoiser evaluations follow a fixed period. We use the same
zero-indexed solver convention as the main method, \(s\in\{0,\ldots,T-1\}\).
Let \(R\) be the full-step period after warm-up, meaning that one full denoiser
evaluation is performed every \(R\) solver steps. The full-step set is
\begin{equation}
\mathcal S_{\mathrm{full}}
=
\{0,\ldots,W-1\}
\cup
\left\{
s\in\{W,\ldots,T-1\}:
(s-W+1)\equiv 0 \pmod R
\right\},
\label{eq:fixed_period_schedule}
\end{equation}
and the cache-step set is
\begin{equation}
\mathcal S_{\mathrm{cache}}
=
\{0,\ldots,T-1\}\setminus \mathcal S_{\mathrm{full}}.
\end{equation}

At a cache step \(s\), the predictor uses only the most recent full-step
representations:
\begin{equation}
\mathcal I_s
=
\operatorname{Last}_{K_{\max}}
\left(
\{i\in\mathcal S_{\mathrm{full}}: i<s\}
\right),
\label{eq:local_history_indices}
\end{equation}
where \(K_{\max}\) is the maximum history length. Unless otherwise stated, we
set \(K_{\max}=6\). Therefore, each cache prediction uses at most six real
output-proximal representations, and the cubic B\'ezier trajectory is fitted
within a short local history window.

\subsubsection{Ablation Experiments}
\label{app:ablation_schedule}
We evaluate the warm-up schedule on FLUX.1-dev using the same DrawBench
protocol and metrics as Section~\ref{sec:experiments}. All variants use the
output-proximal target with \(\lambda=0.5\), \(n=4\), \(K_{\max}=6\), and
fixed period \(R=6\). We vary only the warm-up length
\(W\in\{3,4,5,6,7,8\}\). This isolates the effect of the number of initial full
denoiser evaluations before the first cache prediction.

\begin{table}[t]
\centering
\caption{Warm-up schedule ablation on FLUX. All rows use output-proximal BeziCast with \(R=6\), \(n=4\), \(K_{\max}=6\), and \(\lambda=0.5\).}
\label{tab:schedule_ablation}
\resizebox{\linewidth}{!}{
\begin{tabular}{lcccccc}
\toprule
Warm-up \(W\) & PSNR $\uparrow$ & SSIM $\uparrow$ & LPIPS $\downarrow$ & CLIP $\uparrow$ & ImageReward $\uparrow$ & Latency / Speedup \\
\midrule
3 & 17.070 & 0.623 & 0.242 & 27.117 & 0.984 & 7.881 / $4.78\times$ \\
4 & 18.598 & 0.678 & 0.180 & 27.310 & 1.045 & 8.732 / $4.32\times$ \\
5 & 20.010 & 0.724 & 0.135 & 27.414 & 1.046 & 9.383 / $4.02\times$ \\
6 & 20.993 & 0.750 & 0.115 & 27.192 & 1.028 & 10.109 / $3.73\times$ \\
7 & 22.206 & 0.780 & 0.093 & 27.060 & 1.028 & 10.848 / $3.47\times$ \\
8 & 23.287 & 0.798 & 0.074 & 27.099 & 1.038 & 11.530 / $3.27\times$ \\
\bottomrule
\end{tabular}}
\end{table}

Table~\ref{tab:schedule_ablation} shows a clear speed--fidelity trade-off.
Increasing \(W\) gives the forecaster a longer real history before cache-step
prediction begins and improves reference fidelity: PSNR increases from 17.070
at \(W=3\) to 23.287 at \(W=8\), SSIM increases from 0.623 to 0.798, and LPIPS
decreases from 0.242 to 0.074. The cost is higher latency, which increases from
7.881s to 11.530s and reduces speedup from \(4.78\times\) to \(3.27\times\).
CLIP and ImageReward vary in a narrower range, so this ablation mainly reflects
the trade-off between reference fidelity and runtime.

We use the aggressive \(W=3\) schedule for the \(\mathcal{N}=6\) FLUX main result (in Table~\ref{tab:combined_t2i_t2v_comparison})
because it provides the largest speedup in this ablation while still improving
the matched TaylorSeer baselines.
Specifically, under the same \(\mathcal{N}=6\) setting, BeziCast with \(W=3\)
reaches 17.070 PSNR, 0.623 SSIM, and 0.242 LPIPS at 7.881s, while the best
TaylorSeer row reports 15.872 PSNR, 0.621 SSIM, and 0.281 LPIPS at 8.862s. When
higher reference fidelity is preferred over maximum speed, increasing \(W\)
offers a direct schedule-level adjustment without changing the forecaster,
output target, or pretrained denoiser.

\subsection{Online Inference Algorithm}
\label{app:online_algorithm}

Algorithm~\ref{alg:BeziCast} summarizes inference. We first construct
the warm-up plus fixed-period full-step set. Full steps run the original
denoiser and update the cache. Cache steps compute the equivalent
ridge-B\'ezier weights, optionally blend with the convex-hull fallback, and pass
the predicted output-proximal representation through the unchanged output map.
The \(K_s<n\) branch is a fallback branch; under the default requirement
\(W\ge n\), cache prediction starts after enough full-step
representations are available.

\begin{algorithm}[t]
\caption{BeziCast inference}
\label{alg:BeziCast}
\begin{algorithmic}[1]
\Require pretrained denoiser \((\mathcal{F}_\theta,\mathcal{O}_\theta)\), sampler \(\{\Psi_s\}_{s=0}^{T-1}\), time grid \(\{t_s\}_{s=0}^{T-1}\), monotone coordinates \(\{\tau_s\}_{s=0}^{T-1}\), condition \(c\), warm-up length \(W\), fixed full-step period \(R\), maximum history length \(K_{\max}\), control-point number \(n\), ridge strength \(\lambda\), extrapolation cap \(u_{\max}\), and guardrail parameters \((\beta_{\max},\rho,\gamma)\).
\State Construct \(\mathcal{S}_{\mathrm{full}}\) by Eq.~\eqref{eq:fixed_period_schedule}.
\State Initialize cache \(\mathcal{C}\leftarrow\emptyset\).
\For{\(s=0,\ldots,T-1\)}
    \If{\(s\in\mathcal{S}_{\mathrm{full}}\)}
        \State Compute \(z_s=\mathcal{F}_\theta(x_s,t_s,c)\).
        \State Compute \(\epsilon_s=\mathcal{O}_\theta(z_s,t_s,c)\).
        \State Update \(\mathcal{C}\leftarrow\mathcal{C}\cup\{(\tau_s,z_s)\}\) and keep the most recent \(K_{\max}\) full-step entries.
    \Else
        \State Set \(K_s=\min(K_{\max},|\{i\in\mathcal{S}_{\mathrm{full}}:i<s\}|)\).
        \If{\(K_s<n\)}
            \State Compute \(z_s=\mathcal{F}_\theta(x_s,t_s,c)\).
            \State Compute \(\epsilon_s=\mathcal{O}_\theta(z_s,t_s,c)\).
            \State Update \(\mathcal{S}_{\mathrm{full}}\leftarrow\mathcal{S}_{\mathrm{full}}\cup\{s\}\).
            \State Update \(\mathcal{C}\leftarrow\mathcal{C}\cup\{(\tau_s,z_s)\}\) and keep the most recent \(K_{\max}\) full-step entries.
        \Else
            \State Retrieve the most recent \(K_s\) full-step entries \(\{(\tau_{i_m},z_{i_m})\}_{m=1}^{K_s}\) from \(\mathcal{C}\).
            \State Compute \(\{u_m\}_{m=1}^{K_s}\) and \(u_s\) using Eqs.~\eqref{eq:history_normalization}--\eqref{eq:query_normalization}.
            \State Construct \(A\in\mathbb{R}^{K_s\times n}\) and \(Z\in\mathbb{R}^{K_s\times D}\).
            \State Compute \(\omega_\lambda(u_s)\) using Eq.~\eqref{eq:equivalent_weight}.
            \State Predict \(\hat z_s=\omega_\lambda(u_s)^\top Z\).
            \State Compute \(r(u_s)\) and \(\beta_s\) using Eqs.~\eqref{eq:risk_score} and \eqref{eq:beta_schedule}.
            \State Project \(w=\omega_\lambda(u_s)\) onto \(\Delta\) to obtain \(w_{\mathrm{safe}}\).
            \State Compute \(z_s^{\mathrm{safe}}=w_{\mathrm{safe}}^\top Z\).
            \State Set \(\tilde z_s=(1-\beta_s)\hat z_s+\beta_s z_s^{\mathrm{safe}}\).
            \State Compute \(\epsilon_s=\mathcal{O}_\theta(\tilde z_s,t_s,c)\).
        \EndIf
    \EndIf
    \State Update \(x_{s+1}=\Psi_s(x_s,\epsilon_s)\).
\EndFor
\State \Return \(x_T\).
\end{algorithmic}
\end{algorithm}

\subsection{Basis Dependence of Tikhonov-Regularized Forecasting}
\label{app:basis_dependence}

We clarify why two polynomial bases spanning the same finite-dimensional
function space can nevertheless induce different regularized forecasters.

Let
\[
\boldsymbol{\phi}(u)
=
[\phi_0(u),\ldots,\phi_{n-1}(u)]^\top
\quad\text{and}\quad
\boldsymbol{\psi}(u)
=
[\psi_0(u),\ldots,\psi_{n-1}(u)]^\top
\]
be two bases of the same polynomial space $\mathcal P_{n-1}$.
Because both are bases of $\mathcal P_{n-1}$, there exists an invertible
matrix $M\in\mathbb R^{n\times n}$ such that
\begin{equation}
    \boldsymbol{\psi}(u)=M^\top\boldsymbol{\phi}(u).
    \label{eq:basis_transform}
\end{equation}
For historical coordinates $\{u_m\}_{m=1}^{K}$, define the corresponding
design matrices
\begin{equation}
    A_{\phi}
    =
    \begin{bmatrix}
        \boldsymbol{\phi}(u_1)^\top\\
        \vdots\\
        \boldsymbol{\phi}(u_K)^\top
    \end{bmatrix},
    \qquad
    A_{\psi}
    =
    \begin{bmatrix}
        \boldsymbol{\psi}(u_1)^\top\\
        \vdots\\
        \boldsymbol{\psi}(u_K)^\top
    \end{bmatrix}.
\end{equation}
Equation~\eqref{eq:basis_transform} implies
\begin{equation}
    A_{\psi}=A_{\phi}M.
    \label{eq:design_transform}
\end{equation}

\begin{proposition}[Basis invariance without regularization]
\label{prop:ols_basis_invariance}
Assume that $A_{\phi}$ has full column rank. Let
$Z\in\mathbb R^{K\times D}$ be the matrix of cached representations.
For $\lambda=0$, the least-squares predictors constructed in the two
bases are identical at every query coordinate $u$:
\begin{equation}
\begin{aligned}
    &\boldsymbol{\phi}(u)^\top
    (A_{\phi}^\top A_{\phi})^{-1}
    A_{\phi}^\top Z\\
    &\qquad =
    \boldsymbol{\psi}(u)^\top
    (A_{\psi}^\top A_{\psi})^{-1}
    A_{\psi}^\top Z.
\end{aligned}
\label{eq:ols_basis_invariance}
\end{equation}
\end{proposition}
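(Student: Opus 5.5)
The approach is to substitute the design-matrix transformation in Eq.~\eqref{eq:design_transform} and the basis relation in Eq.~\eqref{eq:basis_transform} into the right-hand side of Eq.~\eqref{eq:ols_basis_invariance}, then simplify with the invertibility of \(M\).

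\textbf{Well-posedness.} First I check that the right-hand side is defined. Since \(A_\phi\) has full column rank and \(M\) is invertible, \(A_\psi=A_\phi M\) also has full column rank. Hence
\[
A_\psi^\top A_\psi=M^\top A_\phi^\top A_\phi M
\]
is a product of invertible \(n\times n\) matrices, and is therefore invertible.

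\textbf{Algebraic simplification.} Next I compute
\[
(A_\psi^\top A_\psi)^{-1}=M^{-1}(A_\phi^\top A_\phi)^{-1}M^{-\top},
\qquad
A_\psi^\top Z=M^\top A_\phi^\top Z.
\]
The factors \(M^{-\top}M^\top\) cancel, giving
\[
(A_\psi^\top A_\psi)^{-1}A_\psi^\top Z
=
M^{-1}(A_\phi^\top A_\phi)^{-1}A_\phi^\top Z .
\]
This is exactly the control-point matrix in the \(\phi\) basis, transformed by \(M^{-1}\). Finally, left-multiplying by \(\boldsymbol{\psi}(u)^\top=\boldsymbol{\phi}(u)^\top M\) cancels \(MM^{-1}\) and yields the left-hand side, for every \(u\).

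An equivalent, more conceptual route is to note that \(A_\phi(A_\phi^\top A_\phi)^{-1}A_\phi^\top\) is the orthogonal projector onto \(\mathrm{col}(A_\phi)=\mathrm{col}(A_\psi)\). Both fits therefore produce the same polynomial in \(\mathcal P_{n-1}\), because the fitted values agree and full column rank makes the coefficients unique in each basis. That polynomial is then evaluated at \(u\).

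\textbf{Main obstacle.} There is no real obstacle. The only care points are confirming that the full-rank hypothesis transfers from \(A_\phi\) to \(A_\psi\), and handling transposes correctly when inverting \(M^\top A_\phi^\top A_\phi M\).

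I would also remark that the cancellation fails once \(\lambda>0\): the \(\psi\)-basis normal matrix becomes \(M^\top A_\phi^\top A_\phi M+\lambda I\), which is generally not \(M^\top(A_\phi^\top A_\phi+\lambda I)M\) unless \(M^\top M=I\). This motivates the basis dependence discussed next.
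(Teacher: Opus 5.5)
Your proposal is correct and matches the paper's proof: substitute $A_\psi=A_\phi M$ and $\boldsymbol{\psi}(u)^\top=\boldsymbol{\phi}(u)^\top M$, invert $M^\top G M$ as $M^{-1}G^{-1}M^{-\top}$, and cancel the factors of $M$. Your explicit check that $A_\psi$ inherits full column rank is a small point the paper leaves implicit.
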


\begin{proof}
Let $G=A_{\phi}^\top A_{\phi}$. By
Eq.~\eqref{eq:basis_transform} and Eq.~\eqref{eq:design_transform},
\[
\boldsymbol{\psi}(u)^\top
=
\boldsymbol{\phi}(u)^\top M,
\qquad
A_{\psi}^\top A_{\psi}
=
M^\top G M.
\]
Since $M$ is invertible,
\[
(M^\top G M)^{-1}
=
M^{-1}G^{-1}M^{-\top}.
\]
Therefore,
\begin{align}
&\boldsymbol{\psi}(u)^\top
(A_{\psi}^\top A_{\psi})^{-1}
A_{\psi}^\top Z \nonumber\\
&=
\boldsymbol{\phi}(u)^\top M
(M^\top G M)^{-1}
M^\top A_{\phi}^\top Z \nonumber\\
&=
\boldsymbol{\phi}(u)^\top M
M^{-1}G^{-1}M^{-\top}
M^\top A_{\phi}^\top Z \nonumber\\
&=
\boldsymbol{\phi}(u)^\top
G^{-1}A_{\phi}^\top Z,
\end{align}
which is precisely the predictor constructed in the
$\boldsymbol{\phi}$ basis.
\end{proof}

Proposition~\ref{prop:ols_basis_invariance} shows that changing the
basis alone does not change the unregularized polynomial least-squares
fit, provided that both bases span the same space and the linear
systems are solved exactly. The situation changes when the same
identity Tikhonov penalty is imposed in both coefficient systems.

\begin{proposition}[Basis dependence of identity Tikhonov regularization]
\label{prop:tikhonov_basis_dependence}
Under the assumptions of
Proposition~\ref{prop:ols_basis_invariance}, let $\lambda>0$.
Define the identity-Tikhonov predictors
\begin{equation}
    \widehat z_{\phi,\lambda}(u)
    =
    \boldsymbol{\phi}(u)^\top
    (A_{\phi}^\top A_{\phi}+\lambda I)^{-1}
    A_{\phi}^\top Z
\end{equation}
and
\begin{equation}
    \widehat z_{\psi,\lambda}(u)
    =
    \boldsymbol{\psi}(u)^\top
    (A_{\psi}^\top A_{\psi}+\lambda I)^{-1}
    A_{\psi}^\top Z.
\end{equation}
Expressed in the $\boldsymbol{\phi}$ coordinates, the second predictor is
\begin{equation}
    \widehat z_{\psi,\lambda}(u)
    =
    \boldsymbol{\phi}(u)^\top
    \left(
        A_{\phi}^\top A_{\phi}
        +
        \lambda M^{-\top}M^{-1}
    \right)^{-1}
    A_{\phi}^\top Z.
    \label{eq:transformed_ridge_predictor}
\end{equation}
Consequently, the two identity-Tikhonov coefficient estimators coincide
for every $Z$ if and only if
\begin{equation}
    M^{-\top}M^{-1}=I,
\end{equation}
or equivalently, $M$ is orthogonal.
For a general non-orthogonal change of basis, the two regularized
forecasters are different despite spanning the same polynomial space.
\end{proposition}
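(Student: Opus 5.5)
The plan mirrors the proof of Proposition~\ref{prop:ols_basis_invariance}: substitute the change of basis, factor $M$ out of the regularized normal matrix, and cancel. Write $G=A_{\phi}^\top A_{\phi}$ and $S=M^{-\top}M^{-1}$. From Eq.~\eqref{eq:basis_transform} and Eq.~\eqref{eq:design_transform} we have $\boldsymbol{\psi}(u)^\top=\boldsymbol{\phi}(u)^\top M$ and $A_{\psi}^\top A_{\psi}=M^\top G M$.

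The key algebraic step is to write the identity penalty in the $\boldsymbol{\psi}$ system as $\lambda I=M^\top(\lambda S)M$. This gives
\[
A_{\psi}^\top A_{\psi}+\lambda I=M^\top\left(G+\lambda S\right)M .
\]
For $\lambda>0$ the matrix $G+\lambda S$ is positive definite, since $S=(M^{-1})^\top M^{-1}\succ 0$. Hence it is invertible, and
\[
(A_{\psi}^\top A_{\psi}+\lambda I)^{-1}=M^{-1}(G+\lambda S)^{-1}M^{-\top}.
\]
Substituting this, together with $A_{\psi}^\top Z=M^\top A_{\phi}^\top Z$, into $\widehat z_{\psi,\lambda}(u)$, the factors $MM^{-1}$ and $M^{-\top}M^\top$ cancel exactly as in the OLS case. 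This yields Eq.~\eqref{eq:transformed_ridge_predictor}. It also shows that the $\boldsymbol{\psi}$-estimator, expressed in $\boldsymbol{\phi}$ coordinates, is $(G+\lambda S)^{-1}A_{\phi}^\top Z$.

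For the equivalence, I will compare the two coefficient maps $(G+\lambda I)^{-1}A_{\phi}^\top$ and $(G+\lambda S)^{-1}A_{\phi}^\top$. Because $A_{\phi}$ has full column rank, $A_{\phi}^\top\in\mathbb R^{n\times K}$ is surjective onto $\mathbb R^n$. Hence, as $Z$ ranges over all matrices, the columns of $A_{\phi}^\top Z$ range over all of $\mathbb R^n$. Equality for every $Z$ is therefore equivalent to $(G+\lambda I)^{-1}=(G+\lambda S)^{-1}$, which holds iff $G+\lambda I=G+\lambda S$, i.e.\ iff $S=I$ (using $\lambda>0$).

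Finally, $M^{-\top}M^{-1}=I$ means $(MM^\top)^{-1}=I$, i.e.\ $MM^\top=I$, which is exactly orthogonality of $M$. The converse direction is immediate by plugging $S=I$ back in.

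The same conclusion also holds for the predictors themselves, not just the coefficients. The $\phi_j$ are linearly independent functions, so the vectors $\boldsymbol{\phi}(u)$ span $\mathbb R^n$ as $u$ varies. Therefore equality of the predictors for all $u$ and all $Z$ is equivalent to equality of the coefficient maps. For non-orthogonal $M$ we have $S\neq I$, so some $Z$ and some $u$ produce different forecasts, which gives the final sentence of the statement.

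The only delicate point is interpreting ``coincide for every $Z$'' precisely enough to pass from equality of matrix products to equality of matrices. The surjectivity of $A_{\phi}^\top$, which comes from the full-column-rank assumption, is what makes this step work, so I will state that use explicitly. The remaining steps are routine.
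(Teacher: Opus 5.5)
Your proof is correct. The equivalence half is exactly the paper's argument: you use surjectivity of $A_{\phi}^\top$, which comes from full column rank, to pass from equality for every $Z$ to $(G+\lambda I)^{-1}=(G+\lambda S)^{-1}$, and then $S=(MM^\top)^{-1}=I$ iff $M$ is orthogonal.

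You differ in how you derive Eq.~\eqref{eq:transformed_ridge_predictor}. The paper works at the level of the objective. It substitutes $C_{\phi}=MC_{\psi}$ into $\|A_{\psi}C_{\psi}-Z\|_F^2+\lambda\|C_{\psi}\|_F^2$, so the penalty becomes $\lambda\operatorname{tr}(C_{\phi}^\top M^{-\top}M^{-1}C_{\phi})$, and then reads off the normal equations. You instead work on the closed form: you write $\lambda I=M^\top(\lambda S)M$, so that $A_{\psi}^\top A_{\psi}+\lambda I=M^\top(G+\lambda S)M$, and then cancel exactly as in Proposition~\ref{prop:ols_basis_invariance}.

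The paper's variational route exposes the transformed regularizer itself, which is what Corollary~\ref{cor:basis_aware_regularizer} ($\Gamma_{\psi}=\Gamma_{\phi}M$) builds on. Your algebraic route makes the parallel with the unregularized case transparent. The only obstruction to the OLS-style cancellation is that $M^\top(\lambda I)M=\lambda M^\top M$ differs from $\lambda I$ unless $M$ is orthogonal. Your route also makes the invertibility of $G+\lambda S$ explicit.

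Your last step uses the fact that the vectors $\boldsymbol{\phi}(u)$ span $\mathbb R^n$ to upgrade equality of coefficient maps to equality of the forecasters as functions of $u$. The paper's proof leaves this step implicit, so your argument properly justifies the statement's final sentence about the forecasters themselves.
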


\begin{proof}
Let $C_{\psi}\in\mathbb R^{n\times D}$ be the coefficient matrix in the
$\boldsymbol{\psi}$ basis. The identity-Tikhonov objective is
\begin{equation}
    \min_{C_{\psi}}
    \|A_{\psi}C_{\psi}-Z\|_F^2
    +
    \lambda\|C_{\psi}\|_F^2.
    \label{eq:psi_ridge}
\end{equation}
Define the equivalent coefficient matrix in the
$\boldsymbol{\phi}$ basis by
\begin{equation}
    C_{\phi}=MC_{\psi}.
\end{equation}
Using $A_{\psi}=A_{\phi}M$, we have
\[
A_{\psi}C_{\psi}=A_{\phi}C_{\phi},
\qquad
C_{\psi}=M^{-1}C_{\phi}.
\]
Thus, Eq.~\eqref{eq:psi_ridge} is equivalent to
\begin{equation}
    \min_{C_{\phi}}
    \|A_{\phi}C_{\phi}-Z\|_F^2
    +
    \lambda
    \|M^{-1}C_{\phi}\|_F^2.
\end{equation}
The regularization term can be written as
\begin{equation}
    \|M^{-1}C_{\phi}\|_F^2
    =
    \operatorname{tr}
    \left(
        C_{\phi}^\top
        M^{-\top}M^{-1}
        C_{\phi}
    \right).
\end{equation}
The corresponding normal equations are therefore
\begin{equation}
    \left(
        A_{\phi}^\top A_{\phi}
        +
        \lambda M^{-\top}M^{-1}
    \right)C_{\phi}
    =
    A_{\phi}^\top Z,
\end{equation}
which yields Eq.~\eqref{eq:transformed_ridge_predictor}.

By comparison, identity Tikhonov regularization directly in the
$\boldsymbol{\phi}$ coordinates gives
\begin{equation}
    \left(
        A_{\phi}^\top A_{\phi}
        +
        \lambda I
    \right)C_{\phi}
    =
    A_{\phi}^\top Z.
\end{equation}
Since $A_{\phi}$ has full column rank, $A_{\phi}^\top$ is surjective
onto $\mathbb R^n$. Hence the two coefficient estimators agree for every
$Z$ if and only if
\[
\left(
A_{\phi}^\top A_{\phi}
+
\lambda M^{-\top}M^{-1}
\right)^{-1}
=
\left(
A_{\phi}^\top A_{\phi}
+
\lambda I
\right)^{-1},
\]
which is equivalent to
\[
M^{-\top}M^{-1}=I.
\]
For an invertible square matrix, this condition is equivalent to
$M$ being orthogonal.
\end{proof}

\begin{corollary}[Basis-aware Tikhonov invariance]
\label{cor:basis_aware_regularizer}
Let the Tikhonov matrix in the $\boldsymbol{\phi}$ coordinates be
$\Gamma_{\phi}$. To obtain the same regularized function estimator in
the $\boldsymbol{\psi}$ coordinates, the regularizer must be transformed as
\begin{equation}
    \Gamma_{\psi}
    =
    \Gamma_{\phi}M.
    \label{eq:regularizer_transform}
\end{equation}
In particular, if $\Gamma_{\phi}=I$, then the basis-equivalent penalty
in the $\boldsymbol{\psi}$ coordinates is $\Gamma_{\psi}=M$, rather
than $I$.
\end{corollary}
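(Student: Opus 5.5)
The plan is to reduce the corollary to the change-of-variables argument already carried out in the proof of Proposition~\ref{prop:tikhonov_basis_dependence}, now with a general Tikhonov matrix in place of the identity. I would start from the $\boldsymbol{\psi}$-coordinate objective with penalty matrix $\Gamma_\psi$,
\[
\min_{C_\psi}\ \|A_\psi C_\psi-Z\|_F^2+\lambda\|\Gamma_\psi C_\psi\|_F^2 .
\]
I would then substitute $C_\phi=MC_\psi$ and use $A_\psi=A_\phi M$ from Eq.~\eqref{eq:design_transform}. This gives $A_\psi C_\psi=A_\phi C_\phi$ and $\Gamma_\psi C_\psi=\Gamma_\psi M^{-1}C_\phi$. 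Since $M$ is invertible, the substitution is a bijection on coefficient matrices. Hence the $\boldsymbol{\psi}$-problem is equivalent to a $\boldsymbol{\phi}$-problem with penalty $\lambda\|\Gamma_\psi M^{-1}C_\phi\|_F^2$. Moreover, the fitted function $u\mapsto\boldsymbol{\psi}(u)^\top C_\psi=\boldsymbol{\phi}(u)^\top C_\phi$ is unchanged by Eq.~\eqref{eq:basis_transform}.

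Next I would impose the choice $\Gamma_\psi=\Gamma_\phi M$. With this choice, $\Gamma_\psi M^{-1}=\Gamma_\phi$, so the transformed objective coincides term by term with the original $\boldsymbol{\phi}$-coordinate objective $\|A_\phi C_\phi-Z\|_F^2+\lambda\|\Gamma_\phi C_\phi\|_F^2$. Identical objectives have identical minimizers, and the two minimizers are related by $C_\phi^\star=MC_\psi^\star$. The predictions therefore agree at every query $u$. This proves that the transformed regularizer yields the same regularized function estimator.

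For the "must be" direction, I would argue via the normal equations. The transformed problem has normal matrix $A_\phi^\top A_\phi+\lambda M^{-\top}\Gamma_\psi^\top\Gamma_\psi M^{-1}$. Repeating the surjectivity argument from the proof of Proposition~\ref{prop:tikhonov_basis_dependence} (full column rank of $A_\phi$), equality of estimators for all $Z$ forces
\[
M^{-\top}\Gamma_\psi^\top\Gamma_\psi M^{-1}=\Gamma_\phi^\top\Gamma_\phi,
\qquad\text{that is,}\qquad
\Gamma_\psi^\top\Gamma_\psi=M^\top\Gamma_\phi^\top\Gamma_\phi M .
\]
So the prescription $\Gamma_\psi=\Gamma_\phi M$ is determined up to a left orthogonal factor, which leaves $\|\Gamma_\psi C\|_F$ unchanged.

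The main obstacle is this uniqueness caveat, so I would phrase the statement as equality of the Gram forms and note the orthogonal-factor freedom. Existence of the regularized solution needs the normal matrix to be invertible; I would assume this as in Eq.~\eqref{eq:app_tikhonov_solution_general}, and it is automatic when $\Gamma_\phi$ is invertible. The special case $\Gamma_\phi=I$ then gives $\Gamma_\psi=M$ immediately.
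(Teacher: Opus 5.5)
Your sufficiency argument is the paper's proof: substitute $C_\phi = MC_\psi$, use $A_\psi = A_\phi M$, and observe that $\Gamma_\psi=\Gamma_\phi M$ makes the data-fitting and penalty terms coincide, so the minimizers and fitted functions agree. The paper stops there and never justifies the word ``must''; your normal-equation argument supplies that direction and correctly shows that only $\Gamma_\psi^\top\Gamma_\psi = M^\top\Gamma_\phi^\top\Gamma_\phi M$ is forced, i.e.\ $\Gamma_\psi=\Gamma_\phi M$ is necessary only up to a left orthogonal factor, which is the accurate form of the claim (as a minor point, the full column rank of $A_\phi$ you already use makes $A_\phi^\top A_\phi+\lambda\Gamma^\top\Gamma$ invertible for every $\Gamma$, so you need not assume $\Gamma_\phi$ invertible).
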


\begin{proof}
Since $C_{\phi}=MC_{\psi}$,
\[
\|\Gamma_{\phi}C_{\phi}\|_F^2
=
\|\Gamma_{\phi}MC_{\psi}\|_F^2.
\]
Thus, choosing $\Gamma_{\psi}=\Gamma_{\phi}M$ makes both the data-fitting
term and the regularization term identical under the change of
coordinates.
\end{proof}

\paragraph{Application to Bernstein and shifted-Chebyshev bases.}
Let
\[
\boldsymbol b(u)
=
[B_0^3(u),B_1^3(u),B_2^3(u),B_3^3(u)]^\top
\]
be the cubic Bernstein basis on $[0,1]$, and let
\[
\boldsymbol c(u)
=
[T_0(2u-1),T_1(2u-1),T_2(2u-1),T_3(2u-1)]^\top
\]
be the shifted cubic Chebyshev basis. They are related by
\begin{equation}
    \boldsymbol c(u)=M^\top\boldsymbol b(u),
\end{equation}
where
\begin{equation}
M=
\begin{bmatrix}
1 & -1 & 1 & -1\\
1 & -\frac{1}{3} & -\frac{5}{3} & 5\\
1 & \frac{1}{3} & -\frac{5}{3} & -5\\
1 & 1 & 1 & 1
\end{bmatrix}.
\end{equation}
This transformation is not orthogonal; for example, the inner product
between its first and third columns is $-4/3$. Therefore,
\[
M^{-\top}M^{-1}\neq I.
\]
By Proposition~\ref{prop:tikhonov_basis_dependence}, applying the same
identity Tikhonov penalty and the same numerical value of $\lambda$ in
the Bernstein and shifted-Chebyshev coefficient systems generally
produces different fitted trajectories, different equivalent history
weights, and different cache-step predictions.

This distinction does not imply that one basis universally dominates
the other. It shows that, once coefficient regularization is introduced,
``the choice of basis'' and ``the choice of regularizer'' jointly define
the forecasting estimator. Two bases with the same approximation space
need not yield the same regularized forecasting operator.

\begin{proposition}[Uniform norm comparison between Bernstein and shifted-Chebyshev bases]
\label{prop:bernstein_chebyshev_basis_norm}

For an integer $d\geq 0$, define the degree-$d$ Bernstein basis vector on
the normalized coordinate $u\in\mathbb R$ as
\begin{equation}
    \mathbf b_d(u)
    =
    \left[
        B_0^d(u),\ldots,B_d^d(u)
    \right]^\top,
    \qquad
    B_j^d(u)
    =
    \binom{d}{j}u^j(1-u)^{d-j},
\end{equation}
and define the standard shifted-Chebyshev basis vector as
\begin{equation}
    \boldsymbol\phi_d(u)
    =
    \left[
        T_0(2u-1),\ldots,T_d(2u-1)
    \right]^\top,
\end{equation}
where $T_k$ denotes the Chebyshev polynomial of the first kind.

Then, for every $u\in\mathbb R$,
\begin{equation}
    \boxed{
    \|\mathbf b_d(u)\|_2
    \leq
    \|\boldsymbol\phi_d(u)\|_2
    }.
    \label{eq:uniform_basis_norm_comparison}
\end{equation}
Moreover, the inequality is strict for every $u\in\mathbb R$ whenever
$d\geq1$.
\end{proposition}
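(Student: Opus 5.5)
The proof splits the real line into the Bernstein interval $u\in[0,1]$ and its complement, and in both cases compares $\|\mathbf b_d(u)\|_2$ with $\|\mathbf b_d(u)\|_1$. Write $x=2u-1$. On the Chebyshev side we only keep a few coordinates of $\boldsymbol\phi_d(u)$ as lower bounds; on the Bernstein side we use $\|\mathbf b_d(u)\|_2\le\|\mathbf b_d(u)\|_1$. The case $d=0$ is trivial: both vectors equal $[1]$, so equality holds. Assume $d\ge1$ from now on.

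\emph{Case $u\in[0,1]$.} The Bernstein functions are nonnegative and form a partition of unity, so $\|\mathbf b_d(u)\|_1=1$ and hence $\|\mathbf b_d(u)\|_2\le1$. On the Chebyshev side, $T_0\equiv1$ and $T_1(x)=x$, so
\[
\|\boldsymbol\phi_d(u)\|_2^2\ge 1+x^2\ge1 .
\]
For strictness, note that equality would force both $x=0$ (that is, $u=1/2$) and $\|\mathbf b_d(u)\|_2=1$. The second condition requires $\mathbf b_d(u)$ to be a standard unit vector. But at $u=1/2$ every entry $B_j^d(1/2)=\binom{d}{j}2^{-d}$ is strictly positive, so $\mathbf b_d(1/2)$ is not a unit vector when $d\ge1$. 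Hence at least one of the two inequalities is strict at every $u\in[0,1]$.

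\emph{Case $u\notin[0,1]$, so $|x|>1$.} Apply the binomial theorem to absolute values:
\[
\|\mathbf b_d(u)\|_1=\sum_j\binom{d}{j}|u|^j|1-u|^{d-j}=(|u|+|1-u|)^d .
\]
Outside $[0,1]$ we have $|u|+|1-u|=|2u-1|=|x|$, so $\|\mathbf b_d(u)\|_2\le|x|^d$. For the Chebyshev side we use the standard representation, valid for $|x|\ge1$,
\[
T_d(x)=\tfrac12\bigl[(x+\sqrt{x^2-1})^d+(x-\sqrt{x^2-1})^d\bigr].
\]
Expanding both powers binomially, the odd terms cancel. This leaves $x^d$ plus terms $\binom{d}{2k}x^{d-2k}(x^2-1)^k$, which all have the same sign as $x^d$. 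Therefore $|T_d(x)|\ge|x|^d$. Keeping the coordinates $T_0$ and $T_d$ then gives
\[
\|\boldsymbol\phi_d(u)\|_2^2\ge1+x^{2d}>x^{2d}\ge\|\mathbf b_d(u)\|_2^2,
\]
which is strict.

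The main obstacle is the bound $|T_d(x)|\ge|x|^d$ for $|x|>1$. If the sign bookkeeping in the binomial expansion is awkward, an alternative is to write $T_d(\cosh t)=\cosh(dt)$ and use $\cosh(dt)\ge\cosh^d t$, for example by induction via $\cosh((k+1)t)\ge\cosh(kt)\cosh t$; negative $x$ then follows by parity, $T_d(-x)=(-1)^dT_d(x)$. Everything else reduces to the partition of unity and the identity $|u|+|1-u|=|2u-1|$ off $[0,1]$.
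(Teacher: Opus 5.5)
Your proof is correct and follows the paper's route. It uses the same split into $u\in[0,1]$ and $u\notin[0,1]$, the same bound $\|\mathbf b_d(u)\|_2\le\|\mathbf b_d(u)\|_1$ (evaluated via the partition of unity and via $(|u|+|1-u|)^d=|2u-1|^d$), and the same key inequality $|T_d(x)|\ge|x|^d$ for $|x|\ge1$. The differences are minor:
\begin{itemize}
\item You prove that inequality from the closed form $T_d(x)=\tfrac12\bigl[(x+\sqrt{x^2-1})^d+(x-\sqrt{x^2-1})^d\bigr]$, whereas the paper uses $\cosh(d\theta)\ge\cosh^d\theta$ via Jensen.
\item You get strictness from extra Chebyshev coordinates: $T_1$ on $[0,1]$, which pins equality to $u=1/2$, and $T_0$ off $[0,1]$. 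The paper instead uses $\|\mathbf b_d(u)\|_2<\|\mathbf b_d(u)\|_1$ whenever at least two Bernstein entries are nonzero, together with $\|\boldsymbol\phi_d(u)\|_2=\sqrt{d+1}$ at $u\in\{0,1\}$.
\end{itemize}
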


\begin{proof}
We consider separately the interpolation region $u\in[0,1]$ and the
extrapolation region $u\notin[0,1]$.

\paragraph{Case 1: \(u\in[0,1]\).}
The Bernstein basis functions are nonnegative and form a partition of
unity:
\begin{equation}
    B_j^d(u)\geq0,
    \qquad
    \sum_{j=0}^{d}B_j^d(u)=1.
\end{equation}
Therefore,
\begin{equation}
    \|\mathbf b_d(u)\|_2
    \leq
    \|\mathbf b_d(u)\|_1
    =
    1.
    \label{eq:bernstein_inside_bound}
\end{equation}
The shifted-Chebyshev vector always contains the constant component
\begin{equation}
    T_0(2u-1)=1,
\end{equation}
and hence
\begin{equation}
    \|\boldsymbol\phi_d(u)\|_2
    \geq 1.
    \label{eq:chebyshev_inside_bound}
\end{equation}
Combining Eqs.~\eqref{eq:bernstein_inside_bound} and
\eqref{eq:chebyshev_inside_bound} gives
\begin{equation}
    \|\mathbf b_d(u)\|_2
    \leq
    \|\boldsymbol\phi_d(u)\|_2.
\end{equation}

For $d\geq1$, the inequality is strict. If $u\in(0,1)$, at least two
Bernstein basis entries are positive, so
$\|\mathbf b_d(u)\|_2<\|\mathbf b_d(u)\|_1=1$. At $u\in\{0,1\}$,
$\|\mathbf b_d(u)\|_2=1$, whereas
\[
\|\boldsymbol\phi_d(u)\|_2=\sqrt{d+1}>1.
\]

\paragraph{Case 2: \(u\notin[0,1]\).}
Let
\begin{equation}
    x=|2u-1|>1.
\end{equation}
Using the binomial theorem,
\begin{align}
    \|\mathbf b_d(u)\|_2
    &\leq
    \|\mathbf b_d(u)\|_1 \nonumber\\
    &=
    \sum_{j=0}^{d}
    \binom{d}{j}
    |u|^j|1-u|^{d-j} \nonumber\\
    &=
    \bigl(|u|+|1-u|\bigr)^d \nonumber\\
    &=
    |2u-1|^d
    =
    x^d.
    \label{eq:bernstein_outside_bound}
\end{align}
The last equality follows because
$|u|+|1-u|=|2u-1|$ for $u\notin[0,1]$.

Since the shifted-Chebyshev vector contains its degree-$d$ component,
\begin{equation}
    \|\boldsymbol\phi_d(u)\|_2
    \geq
    |T_d(2u-1)|
    =
    T_d(x),
    \label{eq:chebyshev_component_bound}
\end{equation}
where we used
$|T_d(-x)|=|T_d(x)|=T_d(x)$ for $x\geq1$.

It remains to show that
\begin{equation}
    T_d(x)\geq x^d,
    \qquad x\geq1.
    \label{eq:chebyshev_power_bound}
\end{equation}
Write $x=\cosh\theta$ for some $\theta\geq0$. By the identity
$T_d(\cosh\theta)=\cosh(d\theta)$,
\begin{equation}
    T_d(x)
    =
    \cosh(d\theta)
    =
    \frac{e^{d\theta}+e^{-d\theta}}{2}.
\end{equation}
Because $y\mapsto y^d$ is convex on $\mathbb R_{+}$ for integer
$d\geq1$, Jensen's inequality gives
\begin{align}
    \frac{e^{d\theta}+e^{-d\theta}}{2}
    &\geq
    \left(
        \frac{e^\theta+e^{-\theta}}{2}
    \right)^d \nonumber\\
    &=
    \cosh^d\theta
    =
    x^d.
\end{align}
Therefore,
\begin{equation}
    \|\mathbf b_d(u)\|_2
    \leq
    x^d
    \leq
    T_d(x)
    \leq
    \|\boldsymbol\phi_d(u)\|_2.
\end{equation}

For $d\geq1$ and $u\notin[0,1]$, all $d+1$ Bernstein entries are
nonzero. Hence
$\|\mathbf b_d(u)\|_2<\|\mathbf b_d(u)\|_1$, making the final inequality
strict.

The case $d=0$ gives equality, since both basis vectors reduce to
$[1]$.
\end{proof}

\subsection{Analysis}
\label{sec:analysis}

\paragraph{Complexity.}
Let \(D\) be the flattened representation dimension, \(K_s\le K_{\max}\) the
available history length at cache step \(s\), and \(n\) the number of
B\'ezier control points. At one cache step, building the design matrix costs
\(O(K_sn)\). Solving the \(n\times n\) ridge system costs
\(O(n^3)\), and this part can be precomputed when the cache schedule and
hyperparameters are fixed. The main high-dimensional operation is the weighted
sum \(\omega_\lambda(u_s)^\top Z\), which costs \(O(K_sD)\). The guardrail adds a
simplex projection in \(K_s\) dimensions, costing \(O(K_s\log K_s)\) with a
sorting-based algorithm, and one additional weighted sum. The memory overhead
is \(O(K_{\max}D)\) for cached output-proximal representations.
Algorithm~\ref{alg:BeziCast} does not store the control
representations persistently; prediction uses equivalent historical weights. Our experiments on both FLUX and HunyuanVideo were run on NVIDIA H20 GPU.

\paragraph{Relation to existing predictors.}
Direct reuse is the zeroth-order case \(\hat z_s=z_{i_K}\). Taylor-style
forecasting estimates local finite differences and extrapolates them in a
power basis. Spectral predictors fit global polynomial bases over the denoising
trajectory. BeziCast uses a local function-approximation form: it fits
a short-history Bernstein/B\'ezier trajectory with Tikhonov regularization,
evaluates it at cache steps, and writes the result as equivalent weights over
historical output-proximal representations. The control-point number \(n\) is
chosen independently of the cache period.

\paragraph{Limitation.}
BeziCast is a training-free acceleration method and therefore preserves the pretrained denoiser and sampler, but its effectiveness depends on the temporal regularity of output-proximal denoising representations; when the denoising trajectory changes abruptly, aggressive cache intervals may still introduce prediction errors despite Tikhonov stabilization and the convex-hull guardrail. Its speed–quality trade-off is also sensitive to schedule and hyperparameters such as the warm-up length, cache period, Bézier control-point number, and regularization strength, which may require task- or model-specific tuning. In addition, our experiments mainly cover FLUX.1 for text-to-image generation and HunyuanVideo for text-to-video generation, so further validation on more architectures, samplers, resolutions, longer videos, and stronger downstream metrics is needed to fully establish generality. Finally, BeziCast assumes access to a suitable output-proximal representation and still executes the output map at cache steps, which may limit its applicability or acceleration benefit for architectures where such representations are inaccessible or the output head is non-negligible in cost.